\relax
\documentclass[letterpaper]{article} 
\usepackage{aaai22}  
\usepackage{times}  
\usepackage{helvet} 
\usepackage{courier}  
\usepackage[hyphens]{url}  
\usepackage{graphicx} 
\urlstyle{rm} 
\usepackage{natbib}  
\usepackage{caption} 
\DeclareCaptionStyle{ruled}{labelfont=normalfont,labelsep=colon,strut=off} 
\frenchspacing  
\setlength{\pdfpagewidth}{8.5in}  
\setlength{\pdfpageheight}{11in}  
%

%
%
\pdfinfo{
/Title (Partner-Aware Algorithms in Decentralized Cooperative Bandit Teams)
/Author (Erdem Biyik, Anusha Lalitha, Rajarshi Saha, Andrea Goldsmith, Dorsa Sadigh)
/TemplateVersion (2022.1)
}

\setcounter{secnumdepth}{2} 

%



\usepackage{amsmath}
\usepackage{amssymb}
\usepackage{epsfig, psfrag}
\usepackage{amsthm}
\usepackage{latexsym}
\usepackage{bbm}
\usepackage{textcomp}
\usepackage[ruled,noend,noline,linesnumbered]{algorithm2e}
\SetKw{Init}{Initialize:}
\SetKw{Def}{Definition:}

\usepackage[bottom]{footmisc}

\usepackage{comment}

\usepackage{mathtools}
\mathtoolsset{showonlyrefs=true}

\usepackage{epstopdf}
\usepackage{subcaption}
\usepackage{xcolor}

\usepackage{enumitem}

\setlength{\abovedisplayskip}{0pt}
\setlength{\belowdisplayskip}{0pt}
\setlength{\belowcaptionskip}{-6pt}
\setlength{\abovedisplayshortskip}{0pt}
\setlength{\belowdisplayshortskip}{0pt}

\newtheorem{theorem}{Theorem}
\newtheorem{lemma}{Lemma}

\newtheorem{fact}{Fact}
\theoremstyle{definition}

\renewenvironment{quote}
  {\list{}{\rightmargin=0cm \leftmargin=0.25cm}%
   \item\relax}
  {\endlist}

\def\expe{\mathbb{E}}   
\def\P{\mathsf{P}}   

\def\argmax{\mathop{\rm arg\,max}}

\def\reg{\mathsf{R}}

\def\mc{\mathcal}

\newcommand{\indicate}[1]{\mathbf{1}{\{#1\}}}

\title{Partner-Aware Algorithms in Decentralized Cooperative Bandit Teams}

\author {
    Erdem B\i y\i k\textsuperscript{\rm 1},
    Anusha Lalitha\textsuperscript{\rm 1},
    Rajarshi Saha\textsuperscript{\rm 1},
    Andrea Goldsmith\textsuperscript{\rm 1,2},
    Dorsa Sadigh\textsuperscript{\rm 1,3}\\
}
\affiliations {
    \textsuperscript{\rm 1} Department of Electrical Engineering, Stanford University \\
    \textsuperscript{\rm 2} Department of Electrical and Computer Engineering, Princeton University \\
    \textsuperscript{\rm 3} Department of Computer Science, Stanford University \\
    \{ebiyik, alalitha, rajsaha, dorsa\}@stanford.edu, goldsmith@princeton.edu
}

\begin{document}

\maketitle

\begin{abstract}
When humans collaborate with each other, they often make decisions by observing others and considering the consequences that their actions may have on the entire team, instead of greedily doing what is best for just themselves. We would like our AI agents to effectively collaborate in a similar way by capturing a model of their partners. In this work, we propose and analyze a decentralized Multi-Armed Bandit (MAB) problem with coupled rewards as an abstraction of more general multi-agent collaboration. We demonstrate that na\"ive extensions of single-agent optimal MAB algorithms fail when applied for decentralized bandit teams. Instead, we propose a \emph{Partner-Aware} strategy for joint sequential decision-making that extends the well-known single-agent Upper Confidence Bound algorithm. We analytically show that our proposed strategy achieves logarithmic regret, and provide extensive experiments involving human-AI and human-robot collaboration to validate our theoretical findings. Our results show that the proposed partner-aware strategy outperforms other known methods, and our human subject studies suggest humans prefer to collaborate with AI agents implementing our partner-aware strategy.
\end{abstract}

\section{Introduction}
\label{sec:intro}
One of the key characteristics of human-human interaction is people's ability to seamlessly anticipate and take complementary actions when working with others.
For example, the moment our partner reaches for a box of cereal, we automatically walk to the fridge to grab milk.
The success of multi-agent systems or human-AI teams usually depends on not only each agent's actions, but also how well they model other agents' policies and the interplay between them.
As another example, consider a semi-autonomous car where both the actions of the driver, e.g., keeping or changing lanes, and assistive guidance, e.g., corrections that keep the car inside the lanes, determine the control of the vehicle. Here, we expect the guidance to predict the driver's intent and augment their actions to enhance safety and comfort. 

Decentralized learning is particularly challenging when some agents have limited information of the outcomes of the actions taken by the team. 
In the car example, even though both human and assistive guidance share the same goal of safety and comfort, the guidance may not fully observe the driver's internal objective of, for instance, changing lanes to exit and get gas at a cheaper station.
Although explicit communication can alleviate some of these challenges, it is often impractical or expensive: we cannot expect the guidance system to ask for and expect feedback after every decision of the driver. On the other hand, humans rely on implicit communication for coordination in many interactive settings~\cite{breazeal2005effects,che2020efficient,losey2020learning}. They generally make effective inferences by simply observing and reasoning over their partner's actions. Hence, we question if AI agents can accurately model others' policies without explicit communication to effectively coordinate and cooperate.

Previous works such as theory of mind~\cite{simon1995information,baker2017rational,brooks2019building,lee2019theoryofmind} and opponent modeling in multi-agent learning~\cite{foerster2018learning,shih2021critical,xie2020learning}
showed the performance of human-AI and multi-agent teams may significantly increase if the agents accurately model each other's policies.
However, most of these approaches require recursive belief modeling or rely on learned partner representations, which can often be complex and computationally intractable.

Our goal is to develop a simple and tractable approach for modeling partners in decentralized multi-agent teams that is guaranteed to improve performance.
We specifically focus on decentralized Multi-Armed Bandit (MAB) problems, which extend the stochastic MAB, a fundamental model for sequential decision-making to explore an agent's environment efficiently. 
Our decentralized MAB formulation captures the essential elements of multi-agent collaborative learning.
First, we model the team reward to be dictated by the actions of all agents, which is common in many realistic collaborations, e.g., the safety and comfort of a semi-autonomous vehicle depend on both the driver's and the guidance system's actions.
Second, we model the heterogeneity in the information available to each agent by introducing partial observability over rewards, e.g., the vehicle does not always accurately observe whether the human is looking for the fastest route or the cheapest gas station.
Hence, our formulation requires collaboration among agents to accomplish the task of learning the optimal team action while only observing each others' actions. 

One might hope that na\"ive extensions of well-known bandit algorithms such as \emph{Thompson Sampling} and \emph{Upper Confidence Bound (UCB)} would be sufficient for effective collaboration in these settings. However, we demonstrate these extensions fail to provide logarithmic regret. 
Our key insight is to leverage the simplicity of these well-known algorithms while predicting our partner's actions --- \emph{make the agent with lower observability of rewards follow the agent with the higher observability.} Specifically, we propose a computationally simple partner-aware bandit learning algorithm where the follower learns to predict its partner’s actions while choosing its own action.
We analytically show that this algorithm incurs regret logarithmic in time horizon.

Our main contributions are:
\begin{itemize}[leftmargin=20pt]
    \item We propose a computationally efficient partner-aware bandit algorithm, which anticipates the partner's action and effectively coordinates with the partner.
    
    \item We analytically prove our proposed algorithm significantly improves the team performance and provides logarithmic regret.
    
    \item Finally, we conduct extensive simulations and an in-lab collaborative robot experiment shown in Fig.~\ref{fig:frontfig}. Our results suggest our algorithm significantly improves the team performance and is preferred by the users.
\end{itemize}

\begin{figure*}
  \includegraphics[width=\textwidth]{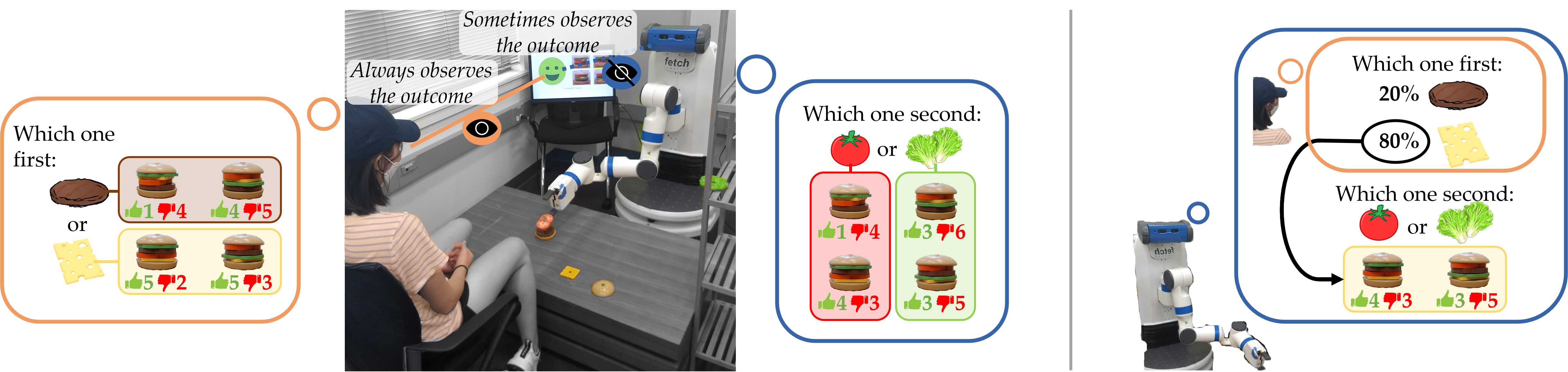}
  \caption{AI agents collaborating with humans should model their human partners. (Left) A robot and a human collaborate on a burger stacking task. (Right) The robot decides its actions by modeling the actions of the human.}
  \label{fig:frontfig}
\end{figure*}

\section{Related Work}
\noindent\textbf{Multi-Agent Multi-Armed Bandits.}
Existing decentralized cooperative MAB algorithms make one or more of the following assumptions: (i) agents independently interact with the same MAB \cite{DBLP:conf/aaai/LupuDP19}, (ii) they use sophisticated communication protocols to exchange information about rewards and the number of times actions were played \cite{landgren_cdc16,martinez_nips2019,sankararaman2019social,shahrampour_dece_MAB,barrett2014communicating}, (iii) when sophisticated communication is not possible, agents share their latest action and reward \cite{madhushani2019heterogeneous}.

These assumptions are often required to simplify the analysis, but are not realistic in most human-AI interactions, e.g., collaborative transport, assembly, cooking, or autonomous driving, where (i) agents’ actions influence the outcome for the whole team, (ii) they do not have explicit communication channels or might have different state, action representations that are difficult to communicate, or (iii) they have different capabilities, e.g., noisier sensors.

In our work, we do not make any of these assumptions. We advance the current literature by analyzing a more realistic model suited for collaborative human-AI interaction, where we: (i) relax the assumption of independent agents through coupled rewards, (ii) allow only implicit communication, i.e., agents can only observe each other’s actions and not the rewards, (iii) relax the assumption on homogeneity of agents, i.e., some agents may receive noisier rewards. Hence, existing algorithms in multi-agent multi-armed bandits are not applicable in our setting. Our contribution is a novel and computationally simple partner-aware algorithm for decentralized collaboration, and proving it incurs logarithmic regret for any finite number of arms.

\smallskip
\noindent\textbf{Multi-Agent Learning.} 
Recent works have shown the importance of partner modeling in multi-agent environments \cite{devin2016implemented,zhu2020multi}. \citet{foerster2018learning} proposed an algorithm that 
improves performance in repeated prisoner's dilemma using opponent modeling. 
\citet{losey2020learning} showed that agents can implicitly communicate through their actions. Other works have learned partner representations for effective coordination~\cite{shih2021critical,xie2020learning,grover2018learning}.
However, these approaches are either not guaranteed to effectively coordinate as they heavily rely on learned representations or can lead to suboptimal solutions in multi-agent MAB, where agents need to take the optimal action more frequently over time to avoid linearly growing regrets.

\smallskip
\noindent\textbf{Humans in Multi-Armed Bandits.} 
\citet{zhang2013cheap} compared how various algorithms match with the actions of humans playing a stochastic MAB.
While our algorithm does not specifically model the partner as an agent incorporating the imperfections humans have, we observe via our user studies that it can collaborate well not only in multi-AI teams but also with human partners.


\section{Problem Setting}
\label{sec:problem}
In this section, we present a decentralized MAB formulation that captures essential aspects of multi-agent decentralized collaborative learning. 

\smallskip
\noindent\textbf{Running Example.} Consider a human-robot team tasked with stacking burgers in a fast-food restaurant, where they stack the ingredients together (see Fig.~\ref{fig:frontfig}). Suppose the human is responsible for the patty and the cheese in the burger, whereas the robot stacks tomatoes and lettuce. As many people have strong opinions about in what order these ingredients should be stacked \cite{burger_blog}, the robot should predict the human's actions to better coordinate on stacking the burger.
If the robot only has partial information about whether a guest liked a burger, it might take suboptimal actions even though the human may have already discovered the optimal action and expected the robot to comply.

Formally, at every time instant $t$ each agent $i$, where $i \in \{1,2\}$,\footnote{We generalize to more agents in Section~\ref{sec:sim}.} chooses an action $a^{(i)}_t \in \mc{A}_i$ locally. The team action is defined as the union of both agents' actions, i.e., $a_t := (a^{(1)}_t, a^{(2)}_t)$. The team action space is denoted by  $\mc{A} = \mc{A}_1 \times \mc{A}_2$. It is helpful to think of $\mc{A}$ as a team action matrix and each possible team action as a cell of the matrix. Thus, at any time instant $t$, agent 1 selects one row out of the $|\mc{A}_1|$ rows and agent 2 selects one column out of the $|\mc{A}_2|$ columns. We assume the agents select their local actions simultaneously, i.e., before observing their partner's current action.

For each team action $a \in \mc{A}$, the rewards $\{r^{\ast}_t(a)\}_{t\geq 1}$ are sampled independently from a Bernoulli distribution with unknown mean $\mu_a \in [0,1]$. Note the reward is a function of both agents' actions. We refer to such scenarios as settings with \emph{coupled rewards}, where the actions of all agents govern the reward received by each agent. This necessitates that each agent learns to account for others' actions instead of greedily optimizing its own rewards. In the burger example, the robot learns to choose actions to stack the ingredients in the right order while learning to predict the human's action. 
 
We assume the agents observe the rewards with a fixed probability $p_i$, i.e., $r^{(i)}_t(a) = r^{\ast}_t(a)$ with probability $p_i$, and the agent incorrectly assumes $r^{(i)}_t(a) = 0$ with probability $1-p_i$.\footnote{Agents do not know when they failed to observe the reward. Knowing it is a simpler setting, where agents update their local reward statistics only when a reward is observed.} 
We refer to such scenarios as settings with \emph{partial reward observability}. Going back to our burger stacking example, humans may have better reward observability---they could better sense when a guest has been happy about the burger---whereas the robot needs an explicit feedback.

We assume agents observe each other's actions but not the rewards: at any time $t$ they have the knowledge of all past team actions $\{a_{\tau}\}_{\tau =1}^{t-1}$ and only their own local rewards. 

To summarize, our setting considers truly realistic coordination scenarios where the agent's actions influence the outcome for the whole team. In addition, the agents only observe each other's actions and do not have access to direct communication channels, which covers the difficult case where agents are heterogeneous and might have different modes of communication---a human can easily use language, but that might not be as easy for an AI agent to interpret or use. Finally, as most realistic teams, we assume agents have different capabilities (e.g. sensing capabilities) leading to different observations corresponding to their own local reward observations.

The goal of the team at every time step $t$ is to select an action $a_t$ that maximizes the average team reward $r_t(a_t) \!=\! \frac{1}{2}r^{(1)}_t(a_t) + \frac{1}{2}r^{(2)}_t(a_t)$.\footnote{Since the rewards are coupled, our analysis and Theorem~\ref{thm:main_theorem} will extend to the case where the team reward is any linear combination of agents' rewards.} Let $a_{\ast}$ denote the optimal team action, i.e., $a_{\ast} \!=\! \argmax_{a \in \mc{A}} \expe[r_t(a)]$. Thus, agent 1 seeks to identify the \textit{optimal row}  $(a_{\ast}^{(1)}, \cdot)$ and agent 2 tries to identify the \textit{optimal column} $(\cdot, a_{\ast}^{(2)})$. Their intersection is the optimal cell $a_{\ast}\! =\! (a_{\ast}^{(1)}, a_{\ast}^{(2)})$. Alternatively, the team aims to minimize the cumulative regret defined as $\reg(T):= \expe\left[\sum_{t=1}^T \left(r_t(a_{\ast}) - r_t(a_t)\right) \right]$.
In this work, we aim to design decentralized team action strategies for each agent $i$ that select $a^{(i)}_t$ as a function of past local rewards $r^{(i)}_1(a_1), \ldots, r^{(i)}_{t-1}(a_{t-1})$ and the team actions $a_1, \ldots, a_{t-1}$. We are interested in cases where at least one agent has partial reward observability, i.e, $p_i \neq 1$ for some $i \in \{1,2\}$.

\section{Partner-Aware Bandit Learning}

We now present a learning algorithm for the decentralized collaborative MAB when the agents have partial reward observability, and their rewards are coupled.
Different local observations due to partial reward observability can lead to the agents wanting to select different team actions. Since the agents' rewards are coupled, such a mismatch in the agents' action-choices can cause them to explore their action space inefficiently as a team. To successfully collaborate, the agents need to learn to predict their partners' actions correctly. Modeling partner's belief states and action-strategy has been well-studied in the theory of mind literature; however, such recursive belief modeling techniques can get computationally prohibitive and do not scale well with the number of agents \cite{hellstrom2018understandable}. Instead, we introduce a computationally simple way of predicting the partner's actions in the collaborative multi-armed bandit domain---which is a useful abstraction that enables theoretically analyzing multi-agent interactions. The core ideas of our approach, though simple, lead to an analytical algorithm with logarithmic regret, and can provide insight for partner modeling beyond multi-armed bandits.

Let $p_{\text{max}} := \max\{p_1, p_2\}$ and $p_{\text{min}} := \min\{p_1, p_2\}$. We refer to the agent with higher reward observability ($p_{\text{max}}$) as the \textit{leader} and the other agent as the  \textit{follower}.\footnote{Our algorithm extends to the case where $p_1$ and $p_2$ are unknown, in which case leader and follower roles are assigned randomly and the $p_{\text{max}}$ terms in the denominators of Theorem~\ref{thm:main_theorem} will be replaced with $p_{\text{min}}$.} In our approach, the follower learns to predict the leader's actions. It chooses its local action assuming the leader's current action will match its prediction. As its predictions become more accurate, the leader leads the follower to explore the optimal row in the action matrix $\mc{A}$. Since the leader has higher reward observability, the team can efficiently explore the action matrix.
We rewrite the team action based on leader and follower assignment as:
$a_t \!=\! (a^{(L)}_t, a^{(F)}_t) \!\in\! \mc{A}\!:=\! \mc{A}_L\!\times\mc{A}_F$. We denote the optimal team action as $a_{\ast} = (a^{(L)}_{\ast}, a^{(F)}_{\ast})$. Similarly, $r^{(L)}_t$ and $r^{(F)}_t$ denote the observed rewards. 

\smallskip
\noindent \textbf{Partner-Aware UCB: Follower.} 
We provide the pseudocode in Algorithm~\ref{alg:PA_UCB_follower}. At every time step $t$, the follower predicts the leader's current action by sampling from a distribution $\tilde{\rho}^{(L)}_t$ over leader's action space $\mc{A}_L$ (line 5), which is obtained by normalizing the histogram computed from the leader's past $W$ actions. Intuitively, $\tilde{\rho}^{(L)}_t$ serves an approximation of the leader's action selection strategy. As the leader becomes more confident about the optimal action and starts to exploit, the distribution $\tilde{\rho}^{(L)}_t$ concentrates over the optimal action. Hence, the follower gets more accurate in its predictions of the leader's actions. At every time step, the follower uses its prediction of leader's action $\tilde{a}^{(L)}_t$ to fix a row in the action matrix $\mc{A}$ and choose one of the $|\mc{A}_F|$ columns. To do so, it computes an upper confidence bound on the mean value for the actions in the row $\tilde{a}^{(L)}_t$, and chooses the action maximizing the upper confidence bound (line 6):
\begin{align*}
    a^{(F)}_t := \argmax_{a^{(F)} \in \mc{A}_F} \hat{\mu}^{(F)}_{(\tilde{a}^{(L)}_t,a^{(F)})}\!(t\!-\!1) + \sqrt{\frac{c^{(F)} \log 1/\delta}{n_{(\tilde{a}^{(L)}_t,a^{(F)})}\!(t\!-\!1)}},
\end{align*}
where $\hat{\mu}^{(F)}_a$ denotes the empirical mean of the follower's local rewards, $n_a$ denotes the action count for any team action $a$, and $c^{(F)},\delta>0$ are exploration parameters.

In short, the follower predicts the leader's action by looking at its past $W$ actions. If the leader takes some actions more frequently, then the follower predicts those actions with high probability and aids the leader in exploring them.

\newlength{\textfloatsepsave} \setlength{\textfloatsepsave}{\textfloatsep} \setlength{\textfloatsep}{2pt}
\SetInd{0.7em}{0em}
\begin{algorithm}[t]
    \DontPrintSemicolon
    \KwIn{$\delta\!>\!0$, $W\!\geq\!1$, exploration constant: $c^{(F)}\!>\!0$}
    \Def Denote empirical mean $\hat{\mu}^{(F)}_a(t) = \frac{\sum_{\tau = 1}^t r^{(F)}_{\tau}(a_{\tau}) \indicate{a_{\tau} = a}}{n_a(t)} \quad \forall\, a \in \mc{A}$\;
    Denote upper confidence bound  $f^{(F)}_a(t,\delta) = \hat{\mu}^{(F)}_a(t-1) + \sqrt{\frac{c^{(F)} \log 1/\delta}{n_a(t-1)}} \quad \forall\, a \in \mc{A}$\;
    \Init $n_a(0) = 0, \hat{\mu}^{(F)}_a(0) = 0$, $f^{(F)}_a(1,\delta) = \infty$ for all $a \in \mc{A}$, set $\rho^{(L)}_t(a) = \frac{1}{|\mc{A}_L|} \, \forall a\in \mc{A}_L$ \;
    \For{$t = 1, \ldots, T$}{
        Predict leader's action by sampling $\tilde{a}^{(L)}_{t} \sim \rho^{(L)}_t$\;
        Select $ a^{(F)}_t \gets \argmax_{a \in \mc{A}_F} f^{(F)}_{(\tilde{a}^{(L)}_t, a)}(t,\delta)$ \;
        Perform $a^{(F)}_t$\;
        Observe partner's action $a^{(L)}_t$ and reward $r^{(F)}_t(a_t)$\;
        Update $n_{a_t}(t) \!\gets\! n_{a_t}(t-1) + 1 $ \; 
        Update $\hat{\mu}^{(F)}_{a_t}(t)$ and $f^{(F)}_{a_t}(t,\delta)$\;
        Update $\rho^{(L)}_{t+1}\!(a)\!\gets\! \frac{\sum_{\tau \!=\! \max\{1, t\!-\!W\!+\!1\}}^t \!\indicate{a^{(L)}_{\tau} \!=\! a}}{\min\{t,W\}} \: \forall a \!\in\! \mc{A}_L$ 
    }
\caption{Partner-Aware UCB: Follower}
\label{alg:PA_UCB_follower}
\end{algorithm}
\setlength{\textfloatsep}{\textfloatsepsave}

\smallskip
\noindent \textbf{Partner-Aware UCB: Leader.} Now we present our partner-aware UCB algorithm for the leader. This algorithm is an extension of the well-known UCB algorithm. We provide the pseudocode in Algorithm~\ref{alg:PA_UCB_leader}. For each team action, the leader computes an upper confidence bound on its mean value using the local observations. The leader then selects a team action that maximizes the upper confidence bound (line 6), similar to the follower's selection criterion. The leader then plays its own coordinate of the team action it selected, and it repeats every action it selects for $L$ consecutive time steps (line 8).

\setlength{\textfloatsepsave}{\textfloatsep} \setlength{\textfloatsep}{2pt}
\begin{algorithm}[t]
    \DontPrintSemicolon
    \KwIn{$\delta >0$, $L \geq 1$, exploration constant: $c^{(L)} > 0$}
    \Def Denote empirical mean $\hat{\mu}^{(L)}_a(t) = \frac{\sum_{\tau = 1}^t r^{(L)}_{\tau}(a_{\tau}) \indicate{a_{\tau} = a}}{n_a(t)} \quad \forall\, a \in \mc{A}$\;
    Denote upper confidence bound $f^{(L)}_a(t,\delta) = \hat{\mu}^{(L)}_a(t-1) + \sqrt{\frac{c^{(L)} \log 1/\delta}{n_a(t-1)}} \quad \forall\, a \in \mc{A}$\;
    \Init $n_a(0) = 0, \hat{\mu}^{(L)}_a(0) = 0$, $f^{(L)}_a(1,\delta) = \infty$ for all $a \in \mc{A}$ \;
    \For{$t = 1, \ldots, T$}{
        \eIf{$t\bmod L = 1$ }{
            Select $\left(a^{(L)}_t, \cdot \right) \gets \argmax_{a \in \mc{A}} f^{(L)}_a(t,\delta)$ \;
            }{
            $a^{(L)}_t\gets a^{(L)}_{t-1}$\;
        }
        Perform $a^{(L)}_t$\;
        Observe partner's action $a^{(F)}_t$ and reward $r^{(L)}_t(a_t)$\;
        Update $n_{a_t}(t) \gets n_{a_t}(t-1) + 1 $ \; 
        Update $\hat{\mu}^{(L)}_{a_t}(t)$ and $f^{(L)}_{a_t}(t,\delta)$
    }
\caption{Partner-Aware UCB: Leader}
\label{alg:PA_UCB_leader}
\end{algorithm}
\setlength{\textfloatsep}{\textfloatsepsave}

As the follower predicts the leader's action based on each action's frequency in the past $W$ time steps, the leader repeating its actions more than once ($L\!>\!1$) ensures the follower's prediction matches the leader's action with a high probability.  We use this for our analysis, but employ $L\!=\!1$ in practice to avoid potential losses due to repetitive actions.

\begin{theorem}\label{thm:main_theorem}
For any horizon $T$, if $\delta = \frac{1}{T^2}$, $L = 2$ and $W = 1$, the cumulative regret of partner-aware bandit learning algorithm, as defined in Algorithms~\ref{alg:PA_UCB_follower} and~\ref{alg:PA_UCB_leader}, is logarithmic in the horizon $T$. Specifically, the cumulative regret $\reg(T)$ can be upper bounded by
\begin{align}
    (p_{\text{max}}+p_{\text{min}})&\Delta_{\text{max}}\Bigg[\sum_{i \neq a^{(L)}_{\ast}} \frac{16}{p^2_{\text{max}}\Delta^2_{(i, j^{\ast}(i))}} \log T + \nonumber\\
    &\sum_{i \in \mc{A}_L} \sum_{j \neq j^{\ast}(i)}\frac{16}{p^2_{\text{max}}\tilde{\Delta}^2_{(i, j)}}  \log T + \frac{3\lvert\mc{A}_L\rvert\lvert\mc{A}_F\rvert}{2}\Bigg],
\end{align}
where $\Delta_{\text{max}} = \max_{a \in \mc{A}} \Delta_a$,  $j^{\ast}(i) \!:=\! \argmax_{j \in \mc{A}_F} \mu_{(i,j)}$, 
$\tilde{\Delta}_{(i,j)} \!=\! \mu_{(i,j^{\ast}(i))}\!-\! \mu_{(i,j)}$ for $i \!\in\! \mc{A}_L$ and $j \!\neq\! j^{\ast}(i)$, and $\Delta_{(i,j^{\ast}(i))} \!=\! \mu_{a^{\ast}}\!-\! \mu_{(i,j^{\ast}(i))}$ for $i \!\neq\! a^{(L)}_{\ast}$.
\end{theorem}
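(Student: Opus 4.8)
The plan is to reduce the regret to expected action counts and then control those counts by two intertwined UCB arguments, one governing the leader's choice of row and one the follower's choice of column within a row. \textbf{Step 1 (gap decomposition).} Since each agent's observed reward for $a$ is $\ber(p_i\mu_a)$, we have $\expe[r_t(a)]=\tfrac{p_{\text{max}}+p_{\text{min}}}{2}\mu_a$, so the per-step expected regret of a team action $a$ is $\tfrac{p_{\text{max}}+p_{\text{min}}}{2}\Delta_a$ and
\[
\reg(T)=\tfrac{p_{\text{max}}+p_{\text{min}}}{2}\sum_{a\neq a_\ast}\Delta_a\,\expe[n_a(T)].
\]
I would then apply the telescoping identity $\Delta_{(i,j)}=\Delta_{(i,j^\ast(i))}+\tilde\Delta_{(i,j)}$, splitting each cell's gap into a best-in-row part and a within-row part. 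Writing $N_i=\sum_j n_{(i,j)}(T)$ for the number of times the leader plays row $i$, this regroups the sum into $\sum_{i\neq a^{(L)}_\ast}\Delta_{(i,j^\ast(i))}\expe[N_i]$ plus $\sum_i\sum_{j\neq j^\ast(i)}\tilde\Delta_{(i,j)}\expe[n_{(i,j)}(T)]$. Bounding every surviving gap by $\Delta_{\text{max}}$ pulls $\Delta_{\text{max}}$ and $(p_{\text{max}}+p_{\text{min}})$ outside (the factor $\tfrac12$ absorbed into the constant), so the theorem reduces to two count bounds: $\expe[N_i]\lesssim\tfrac{16}{p_{\text{max}}^2\Delta^2_{(i,j^\ast(i))}}\log T$ for $i\neq a^{(L)}_\ast$, and $\expe[n_{(i,j)}(T)]\lesssim\tfrac{16}{p_{\text{max}}^2\tilde\Delta^2_{(i,j)}}\log T$ for $j\neq j^\ast(i)$, each up to an $O(1)$ additive term.

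\textbf{Step 2 (clean event).} I would define the event $\mc E$ on which, for every cell $a$ and every time, both empirical means stay within their confidence radii of the scaled means $p_{\text{max}}\mu_a$ (leader) and $p_{\text{min}}\mu_a$ (follower). Because rewards lie in $[0,1]$, Hoeffding plus a union bound over the at most $T$ values of $n_a$, the $|\mc A_L||\mc A_F|$ cells, and both agents controls $\P(\mc E^c)$; the choice $\delta=1/T^2$ makes $\sum_t\P(\mc E^c)$ summable, so the off-event contribution to every expected count is $O(1)$. A useful observation is that scaling by $p_i$ preserves the ordering of the means, so $a_\ast$ remains the leader's perceived-optimal cell and $j^\ast(i)$ the follower's perceived-best column in each row; hence the leader's effective gaps are $p_{\text{max}}\Delta_a$ and the within-row problem is a genuine UCB instance.

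\textbf{Step 3 (the two UCB bounds and the prediction lemma).} On $\mc E$ the follower-column bound is a within-row UCB argument: once column $j$ has been sampled on the order of $\log(1/\delta)/\tilde\Delta_{(i,j)}^2$ times (up to the relevant observability and exploration constants), its confidence bound drops below that of $j^\ast(i)$ and it is no longer selected. Here I would invoke the key structural lemma that, with $L=2$ and $W=1$, the follower's one-step histogram prediction $\tilde a^{(L)}_t$ equals the leader's action on every even step (the leader repeats and the follower predicts its last action), guaranteeing that at least half the visits to each row yield clean within-row samples on which the UCB comparison is valid. The leader-row bound is an outer UCB argument: the leader plays row $i$ only while some cell of that row still has an inflated confidence bound, i.e.\ is sampled fewer than $\tfrac{4c^{(L)}\log(1/\delta)}{p_{\text{max}}^2\Delta^2_{(i,j)}}\le\tfrac{4c^{(L)}\log(1/\delta)}{p_{\text{max}}^2\Delta^2_{(i,j^\ast(i))}}$ times, using $\Delta_{(i,j)}\ge\Delta_{(i,j^\ast(i))}$. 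With $\delta=1/T^2$ these thresholds match the claimed $\tfrac{16}{p_{\text{max}}^2(\cdot)^2}\log T$, and the leftover initialization plays (one per cell, since the initial bound is $\infty$) together with the $O(1)$ off-event terms collect into the additive $\tfrac{3|\mc A_L||\mc A_F|}{2}$.

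\textbf{Main obstacle.} The crux is the coupling in Step 3: the leader selects a full cell as its argmax but commits only to the \emph{row}, while the follower independently — and on odd steps possibly in the wrong row — commits to the \emph{column}, so the cell that actually receives a sample is the joint choice and the leader's per-cell counts are not controlled by the leader alone. The classical ``a suboptimal arm is abandoned once well-sampled'' argument must therefore be lifted to rows: the leader leaves row $i$ only after \emph{every} cell in it is sufficiently sampled, and one must argue the follower actually supplies those samples — in particular to $(i,j^\ast(i))$ — rather than stalling on a bad column. This is exactly what the even-step correct-prediction property buys: it forces the follower's within-row exploration to make progress, so the residual ``leader keeps visiting row $i$ via a wrong column'' events are absorbed into the already-bounded follower column-error counts $n_{(i,j)}$, $j\neq j^\ast(i)$. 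Closing this loop quantitatively — synchronizing the leader's row-abandonment threshold with the follower's within-row convergence — is where the real work lies.
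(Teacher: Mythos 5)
Your proposal takes essentially the same route as the paper's proof: your even-step correct-prediction observation for $L=2$, $W=1$, the follower's within-row UCB cap, and the leader's row-abandonment argument are precisely the paper's Lemma~\ref{lemma:row_ucb} and Lemma~\ref{lemma:leader_ucb}, and your regret-to-count reduction with the $(p_{\text{max}}+p_{\text{min}})/2$ scaling is the paper's closing computation. The coupling obstacle you flag is resolved there exactly as you anticipate --- a two-case contradiction on the leader's selected cell (a suboptimal column, refuted by the within-row good event, versus $(i, j^{\ast}(i))$, refuted by comparison against $a_{\ast}$ on the leader's good event), with a factor of $2$ absorbing the odd-step mispredictions --- so what remains in your plan is bookkeeping rather than a missing idea.
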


This theorem analyzes a special case, $L\!=\!2$ and $W\!=\!1$, where the follower predicts the leader will take the same action it took in the last time step. As the leader repeats its actions twice, these predictions are correct for at least half of all time steps.\footnote{Similarly, the proof can be generalized to any $\lfloor\! W/2 \!+ \!1\!\rfloor \!<\! L$.} Thus, the agents jointly explore the row which the leader intends to explore for at least half of the time steps. After the leader converges to the optimal row $a^{(L)}_{\ast}$, the leader and follower jointly explore the optimal column to learn the optimal cell $(a^{(L)}_{\ast}, a^{(F)}_{\ast})$. Proof of Theorem~\ref{thm:main_theorem} uses this intuition (see Appendix~\ref{appendix:main_proof}).

\section{Simulations}
\label{sec:sim}
We now assess the performance of our Partner-Aware UCB algorithm through a set of simulations.\footnote{Code at: \url{https://sites.google.com/view/partner-aware-ucb}} Unless otherwise noted, we set $|\mc{A}_1|=|\mc{A}_2|=2$, $p_1=1$, $p_2=0.5$, $c^{(L)}=c^{(F)}=0.025$ in these simulations.

\smallskip
\noindent\textbf{Validation of Theoretical Results.} \label{subsec:sim0}
We start with validating the theoretical results we established in Theorem~\ref{thm:main_theorem}.
For this, we ran a simulation with fixed reward means.\footnote{We set $\mc{A}_1=\mc{A}_2=\{0,1\}$ and $\mu_{(0,0)}=0.8$, $\mu_{(0,1)}=0.4$, $\mu_{(1,0)}=0.2$, $\mu_{(1,1)}=0.6$. This is a difficult setting, as agents may easily converge to the local optimum $a=(1,1)$.}

Figure~\ref{fig:sim0}~(left) shows the results that validate the theorem. It also provides a comparison between different $L$ values. Having seen that the algorithm performs comparably with no significant difference with varying $L$, we use $L=1$ for the rest of the simulations and experiments, because it reduces the leader to a standard UCB agent and relaxes the assumption that the leader repeats its actions, which is particularly desirable in human-AI interaction with the human acting in the leader role. 

\begin{figure}[h]
    \centering
    \includegraphics[width=\columnwidth]{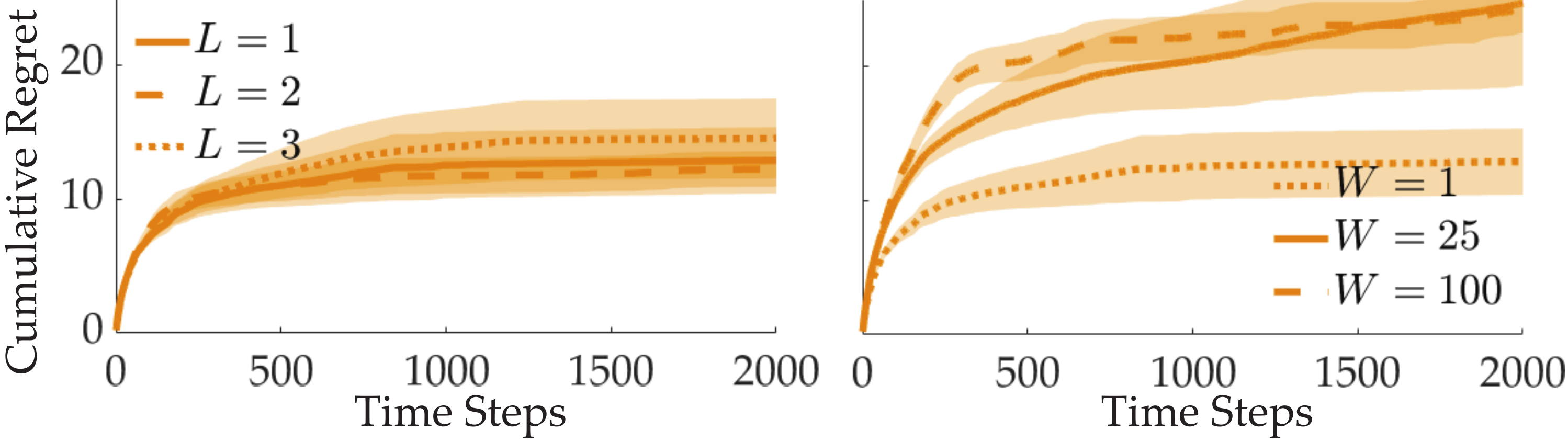}
    \caption{Cumulative regret values over $100$ runs for varying (left) $L$ and (right) $W$. Shaded regions show standard error.}
    \label{fig:sim0}
\end{figure}

We compare different $W$ in Fig.~\ref{fig:sim0} (right). Here, $W\!=\!1$ outperforms larger window widths. However, this assumes the follower is paired with a UCB leader, who selects the next action based on the entire history of actions and local rewards. This is unrealistic when interacting with a human leader. Humans are often bounded rational and make decisions only based on the most recent information~\cite{zhang2013cheap,simon1995information}.  We thus will use higher values of $W$ in practice to increase the follower's horizon to the past, which can potentially improve robustness (see Appendix~\ref{appendix:ucb_matched_with_kg}). 


\smallskip
\noindent\textbf{Effect of Partner-Awareness.} \label{subsec:sim1}
When established MAB algorithms, such as UCB and Thompson sampling, are na\"ively used in the multi-agent case, each agent attempts to solve a single-agent MAB problem in the team action space. While they are known to produce logarithmic regret in the single-agent case, the collaborative problem is much more challenging, since agents can only decide their part of the team action. Hence, we empirically show that simply pairing such standard algorithms in the multi-agent setting fails whereas our Partner-Aware UCB achieves sublinear regret.

\begin{figure}[h]
    \centering
    \includegraphics[width=\columnwidth]{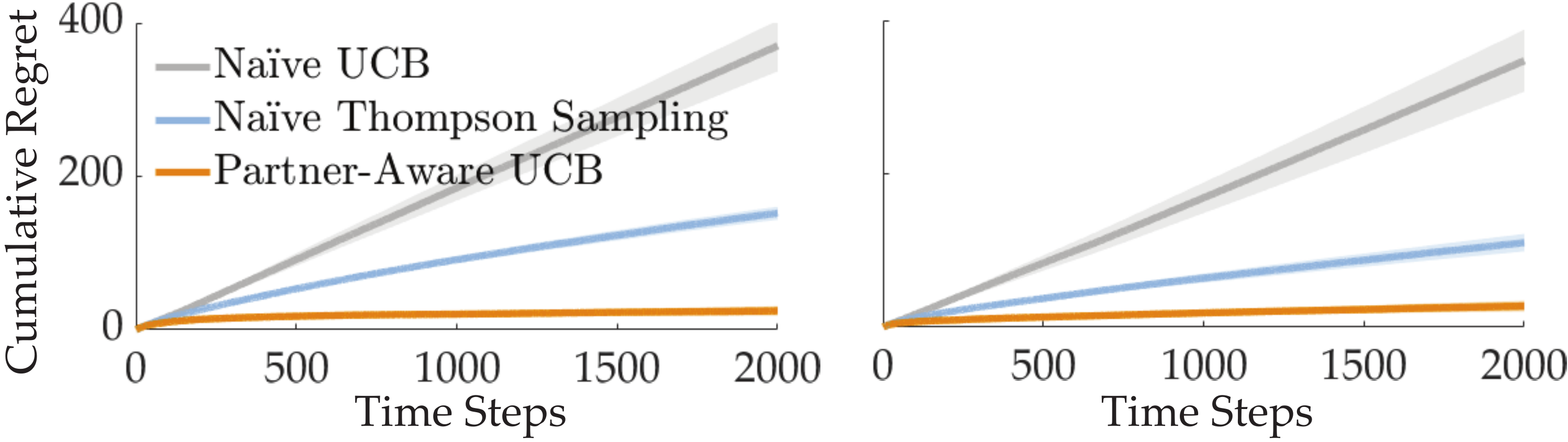}
    \caption{Cumulative regrets over $100$ runs for different algorithms with (left) fixed and (right) random reward means.}
    \label{fig:sim1}
\end{figure}

For this, we ran two simulations: one with fixed reward means (the same as before), and one where the reward means are generated randomly from a $\textrm{Unif}[0,1]$ prior. For Partner-Aware UCB, we set $L=1$, $W=25$. 
Figure~\ref{fig:sim1} shows the results. While Na\"ive UCB and Na\"ive Thompson Sampling result in linear regrets, our algorithm achieves sublinear regret in both cases. This result provides strong empirical evidence for our claim and demonstrates the importance of partner-awareness and partner-modeling. Additional simulations are presented in Appendix~\ref{appendix:very_naive_ucb}.

\smallskip
\noindent\textbf{Varying Other Conditions.}\label{subsec:sim2}
Having demonstrated the success of Partner-Aware UCB, we investigate its performance under varying conditions. Specifically, we check the effects of observability.

\begin{figure}[h]
    \centering
    \includegraphics[width=\columnwidth]{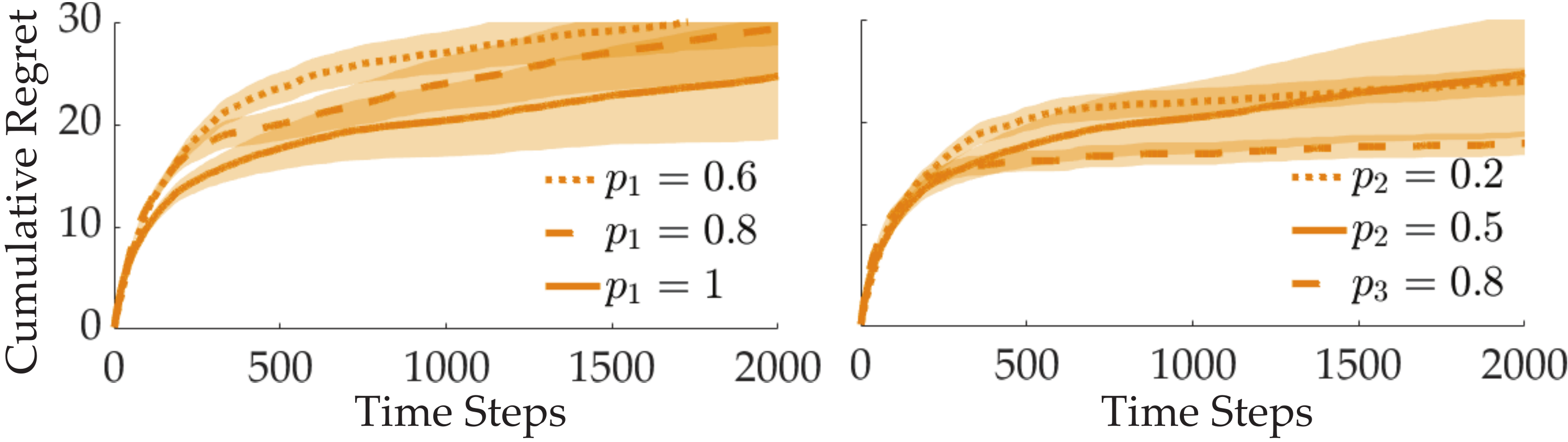}
    \caption{Cumulative regret values over $100$ runs under different (left) leader and (right) follower observabilities.}
    \label{fig:sim2}
\end{figure}

For this, we ran simulations with fixed reward means (the same as before) and vary $p_1\in\{0.6, 0.8, 1.0\}$, $p_2\in\{0.2,0.5,0.8\}$. We set $L=1$, and $W=25$.

Figure~\ref{fig:sim2} shows the results for both varying $p_1$ (left), and $p_2$ (right) experiments. In all cases, Partner-Aware UCB incurs only sublinear regret.\footnote{As the cumulative regret takes partial observability into account, it does not necessarily increase with lower observability.}

\begin{figure}[h]
    \centering
    \includegraphics[width=\columnwidth]{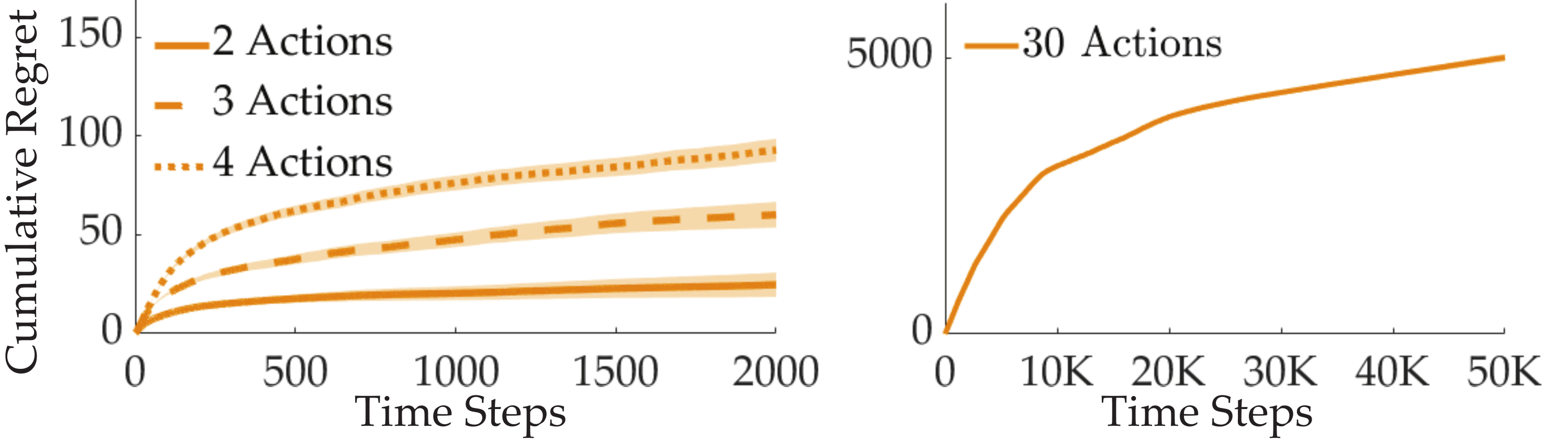}
    \caption{Average regret values over $100$ runs with varying number of available actions for both agents.}
    \label{fig:sim3}
\end{figure}

We also experiment with varying number of available actions to the agents. Figure~\ref{fig:sim3}~(left) shows the results for $|\mc{A}_1|=|\mc{A}_2|\in\{2,3,4\}$, averaged over $100$ runs.\footnote{Similar to the fixed reward values as in the two-action case, we designed the rewards such that there are $|\mc{A}_1|=|\mc{A}_2|$ local optima.} While the incurred regret naturally increases with higher number of actions, Partner-Aware UCB achieves sublinear regret in all cases. Due to different scale, we present the results with $30$ actions on a separate plot in Fig.~\ref{fig:sim3}~(right) under the random reward setting.

\begin{figure}[h]
    \centering
    \includegraphics[width=0.6\columnwidth]{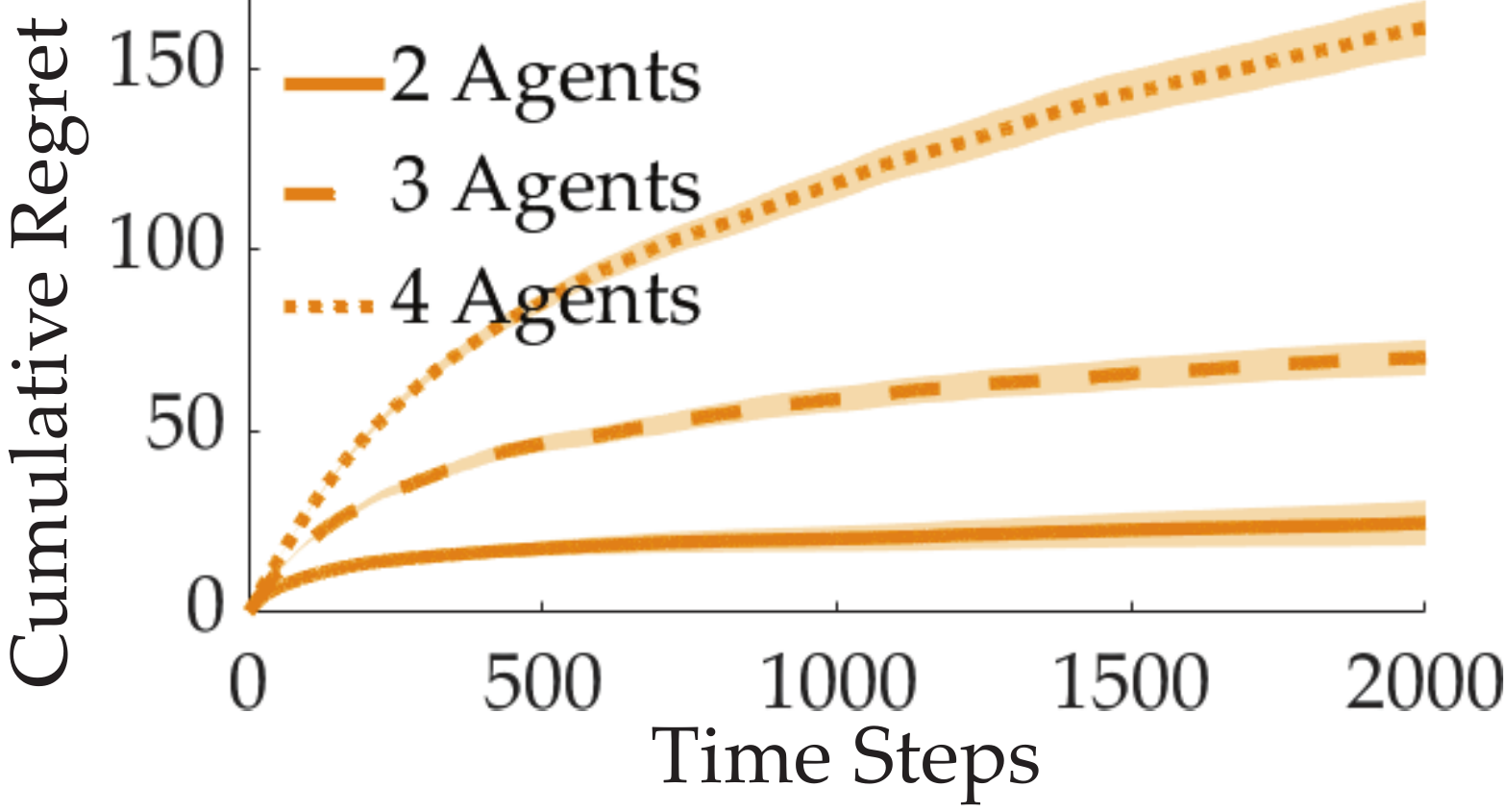}
    \caption{Average regret values over $100$ runs with varying number of agents with the extended algorithm.}
    \label{fig:sim4}
\end{figure}

\noindent\textbf{Generalization to More Agents.} We now generalize our algorithm to more than two agents, a useful formalism for applications in human-robot teams. For this, we first note there is a leader and a follower in the original algorithm, and only one of them models the other. The primary motivation for this is to avoid deadlock situations where agents oscillate between actions to ``catch" the other agent's behavior.

While agents should not model each other, it would also not be enough if they modeled only the agent with the highest reward observability. Even if they could accurately predict that agent's actions, they would still need to solve a decentralized MAB among themselves.

This informs us about the following recursive approach. Suppose we have an $N$-agent problem. As in the original Partner-Aware UCB, the agent with the highest observability does not model the others and tries to optimize its own action as if the others will comply. All other agents model and attempt to predict this leader agent. They now have to deal with an $(N-1)$-agent problem. Hence, the agent with the second highest observability does not model the remaining $N-2$ agents who, on the other hand, model this ``second leader". This hierarchy we impose based on observability continues until the problem reduces to a single-agent problem for the last agent.

To test if the extended algorithm achieves sublinear regret, we ran simulations with varying number of agents from $N\in\{2,3,4\}$, and $|A_i|=2$ for all agents and $p_i=i/N$. The reward means were fixed\footnote{Similar to the other experiments, we designed the reward values such that there are $|A_1|=\dots=|A_N|=2$ local optima.}, and we set $c^{(i)}=0.025$, $L=1$, and $W=25$ for all agents. Fig.~\ref{fig:sim4} shows the results averaged over $100$ runs. The extended Partner-Aware UCB achieves sublinear regret in all cases. 

\smallskip
\noindent\textbf{Generalization to Other Bandits.} The reason why Partner-Aware UCB performs successfully is that it allows the follower agent to learn its best individual action conditioned on the leader's action, which allows the agents to discover the team-optimal action. We hypothesize this idea could solve a broader class of decentralized cooperative bandit problems.

To test this, we simulated two settings: (i) a flipped setting where the agents get a reward of $1$ instead of $0$ with probability of $p_i$ (and still get $r^*_t(a)$ with probability $1-p_i$), (ii) a Gaussian bandits setting where $r^*_t(a)$ comes from an action-dependent stationary Gaussian distribution with unknown mean and variance, and agents get reward $r^*_t(a) + \nu_i$ where $\nu_i\sim\mathcal{N}(0,\sigma_i)$ for different $\sigma_i$ (we set the agent with higher $\sigma_i$ to be the follower as it has noisier rewards).

\begin{figure}[h]
    \centering
    \includegraphics[width=\columnwidth]{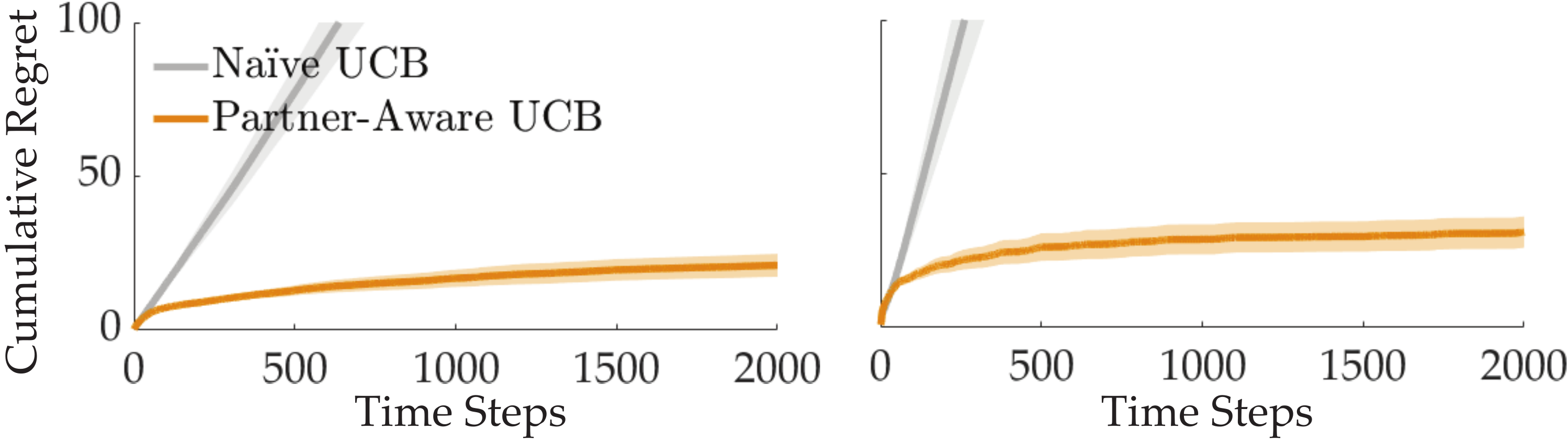}
    \caption{Average regret values over $100$ runs in the (left) flipped and (right) Gaussian settings.}
    \label{fig:other_envs}
\end{figure}

As it can be seen in Fig.~\ref{fig:other_envs}, where we ran simulations with random reward means (and random std for $r^*_t\in[0.1,0.5]$ in Gaussian bandits with $\sigma_1=0.1$ and $\sigma_2=0.5$), Partner-Aware UCB outperforms Na\"ive UCB and achieves a sublinear regret even in these modified settings.

\section{Experiments}
\label{sec:exp}

We now empirically analyze our algorithm through an in-lab human-subject study where the participants collaborate with a robot arm to stack burgers. While this experiment involves a short horizon, we also present an online human-subject study where the participants collaborate with a robot for long horizons to maximize their profit on a grid of slot machines in Appendix~\ref{appendix:casino_section}. Our user studies have been approved by the local research compliance office. Subjects were compensated with \$15/hour for their participation.



\smallskip
\noindent\textbf{Experimental Setup.} We designed a collaborative burger stacking experiment as shown in Fig.~\ref{fig:frontfig}.\footnote{Video at: \url{https://sites.google.com/view/partner-aware-ucb}}
Subjects were told they work at a burger store with a robot to stack burgers. They are responsible for placing the patty and the cheese, whereas the robot is for the tomatoes and lettuce. They decide whether the patty or the cheese should go on top of the bottom bun, and the robot decides the second layer. Decisions are simultaneous without knowing each other's action.

The participants were told there is a fixed probability associated with whether a customer liked the burger. After stacking each burger, the robot and the human are informed about if a customer was satisfied. The robot has a sensor defect and observes only half of the satisfied customers. It senses the others as unsatisfied (human's observability is $p_1\!=\!1$ and robot's $p_2\!=\!0.5$). The goal of both the human and the robot is to maximize the number of satisfied customers.

\smallskip
\noindent\textbf{Independent Variables.} We varied the robot's algorithm: Na\"ive UCB and Partner-Aware UCB. We set, when relevant, $L=1$, $W=2$ and $c^{(L)}=c^{(F)}=0.01$.

\smallskip
\noindent\textbf{Procedure.} We conducted a within-subjects study with a Fetch robot \cite{wise2016fetch} for which we recruited $58$ participants ($22$ female, $36$ male, ages 18 -- 69).
Due to the pandemic conditions, the first five of the subjects participated the study with a real robot in the lab, and the rest participated remotely with an online interface.
The participants interacted with the robot to prepare $40$ burgers together, $20$ with each algorithm. The participants knew the number of burgers they are going to prepare in advance.

Initially, MAB requires significant exploration, so comparison between the two algorithms at early stages will not yield any meaningful results. However, evaluating later stages of collaboration would require many repeated long-term interactions with the robot, which is not feasible due to limitations on the duration of in-lab studies with a robot. Instead, we warm-start each algorithm by allowing them to collaborate with a simulated Na\"ive UCB agent for stacking $20$ burgers to proceed forward in the exploration stage so that the robot's algorithm will be more critical for performance. After these $20$ burgers, the simulated agent is replaced with the study participant for preparing $20$ more burgers with each algorithm. 

The user interface aided the participants by providing information about: the number of satisfied and unsatisfied customers for each burger configuration, the total number of burgers stacked, the configuration of the latest burger and whether it made the customer satisfied.

For a fair comparison, we randomized the reward means only between the users and not between the algorithms.
We swapped the actions to prevent participants from realizing they are dealing with the same problem instance. Hence, between the two sets, for example, $\mu_{(0,1)}$ of the first set was equal to $\mu_{(1,0)}$ in the second. 
To further avoid any bias due to ordering, half of the participants first worked with Na\"ive UCB and the other half with the Partner-Aware UCB.

\smallskip
\noindent\textbf{Dependent Measures.} 
We measured cumulative regret and the total number of satisfied customers. We excluded the first $20$ simulated time steps for a fair comparison. Additionally, the participants took a $5$-point rating scale survey ($1$-Strongly Disagree, $5$-Strongly Agree) consisting of $5$ questions for each algorithm: ``I was usually able to stack the burger I wanted" (\emph{Ability}), ``The robot insisted on some suboptimal burgers" (\emph{Insisting}), ``The robot was easy to collaborate with" (\emph{Easy}), ``The robot was annoying" (\emph{Annoying}), and ``I could get more happy customers if I were stacking burgers alone" (\emph{Alone}).

\smallskip
\noindent\textbf{Hypotheses.}
\begin{quote}
    \textbf{H1.} \textit{Users interacting with Partner-Aware UCB robot will incur smaller regret and keep the customers more satisfied.}\\
    \textbf{H2.} \textit{Users will subjectively perceive the Partner-Aware UCB robot as a better partner who can effectively collaborate with them.}
\end{quote}

\noindent\textbf{Results-Objective.}
Partner-Aware UCB achieves lower regret ($2.7\pm0.31$) compared to Na\"ive UCB ($3.6\pm0.31$) with statistical significance ($p<.005$). Fig.~\ref{fig:survey_robot}~(left) shows the cumulative regret incurred over time with both algorithms.

The significant difference in the cumulative regret is also reflected in the number of satisfied customers supporting \textbf{H1}: Partner-Aware UCB achieved significantly higher number of satisfied customers ($13.5\pm0.6$) than Na\"ive UCB ($12.7\pm0.6$), with $p<.05$.

\begin{figure}[h]
    \centering
    \includegraphics[width=\columnwidth]{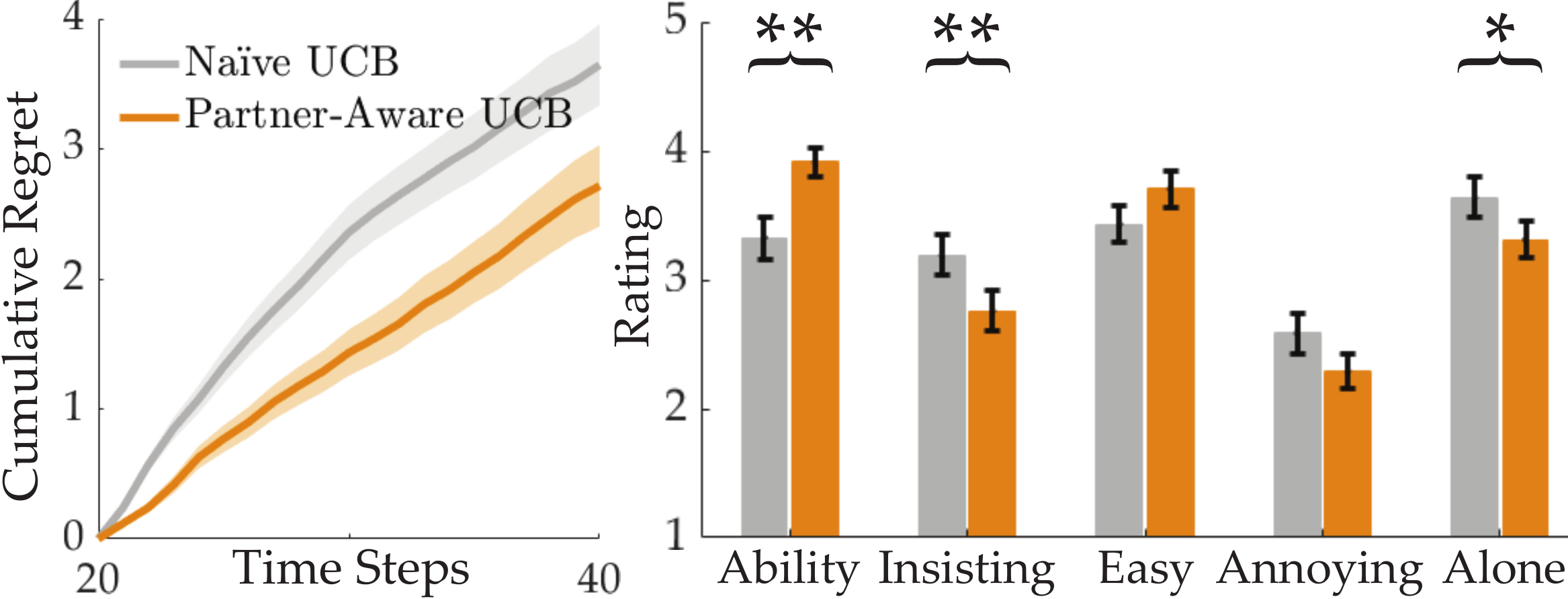}
    \caption{(Left) Average regret over time, (right) survey results for the burger-stacking robot experiment. Single and double asterisks indicate $p\!<\!.05$ and $p\!<\!.005$, respectively.}
    \label{fig:survey_robot}
\end{figure}

\noindent\textbf{Results-Subjective.} The Na\"ive UCB robot often decides stacking an under-explored burger and insists on the same action until that burger is made. While this occasionally helps the humans to exploit a burger, it often causes deadlock situations where both agents are unable to stack their intended burgers. On the other hand, Partner-Aware UCB keeps a model of the human, and avoids such situations. We believe this explains the subjective preferences of the users.

We plot the users' survey responses in Fig.~\ref{fig:survey_robot}~(right). The responses were reliable with Cronbach's alpha $ > 0.95$. The users indicated they were able to stack the burger they wanted (\emph{Ability}) more frequently with the Partner-Aware UCB ($p<.005$), and thought it was easier to collaborate with (\emph{Easy}, $p\approx.07$), whereas found the Na\"ive UCB robot more annoying (\emph{Annoying}, $p\approx.05$). They also indicated the Na\"ive UCB insisted more on the suboptimal burgers (\emph{Insisting}, $p<.005$). Finally, the users think they could have more satisfied customers if they were stacking burgers alone with a higher confidence when partnered with the Na\"ive UCB robot (\emph{Alone}, $p<.05$). These results strongly support \textbf{H2}. We further analyze and discuss how different populations of human users perform differently in Appendix~\ref{appendix:burger_stacking_additional}.

\section{Conclusion}
\label{sec:conclusion}
\noindent\textbf{Summary.} We studied multi-agent decentralized MAB, where the reward obtained by the team depends on all agents' actions. We showed na\"ive extensions of optimal single-agent MAB algorithms -- where each agent disregarded others' actions -- fail when rewards are coupled. We proposed a simple yet powerful algorithm for partners to model and coordinate with the partners who have higher observability over the task. Our algorithm only relies on the observation of partner's actions and accomplishes the coordination without explicit communication. We analytically showed it achieves logarithmic regret and tested our hypotheses through simulations and experiments.

\smallskip
\noindent\textbf{Limitations and Future Work}. The decentralized MAB is a useful abstraction for many real-world coordination tasks, and we are excited that our algorithm yet simple demonstrates significant improvements to enable seamless coordination. However, many applications require more complex formulations such as Markov Decision Processes. In the future, we plan to extend the intuitions gained by our algorithm and analysis to some of these more complex settings.

Another interesting direction is pairing the partner-aware strategy with algorithms other than UCB, e.g., Thompson sampling. Our preliminary results indicate it still gives significant improvements over the na\"ive counterparts.


\section*{Acknowledgments}
The authors acknowledge funding from NSF Awards \#1941722, \#2006388, and \#2125511, Office of Naval Research, and Air Force Office of Scientific Research.

\bibliography{refs}

\appendix\onecolumn
\section*{Appendix}
In the Appendix, we first discuss the effect of parameters $W$ and $L$ in our Partner-Aware UCB algorithm (Appendix~\ref{appendix:effect_of_w_and_l}). We then present the proof of Theorem~\ref{thm:main_theorem} in Appendix~\ref{appendix:main_proof}. Appendix~\ref{appendix:ucb_matched_with_kg} presents the simulation results for what happens when a Partner-Aware UCB follower collaborates with a leader who employs knowledge gradient algorithm. Appendix~\ref{appendix:very_naive_ucb} gives additional simulation results where agents completely ignore the multi-agent aspects of the problem. In Appendix~\ref{appendix:burger_stacking_additional}, we make further analysis on the burger stacking robot experiments that show how different populations of human users perform differently. Appendix~\ref{appendix:casino_section} demonstrates the effectiveness of Partner-Aware UCB in long-term human-robot collaboration through an online human-subject study. Finally, Appendix~\ref{appendix:infrastructure} presents the computation infrastructure we used for our simulations and experiments.

\section{Effect of parameters $W$ and $L$}
\label{appendix:effect_of_w_and_l}
The parameter $W$ denotes the number of leader's past actions used by the follower to compute the sampling distribution $\tilde{\rho}^{(L)}_t$ at any time $t$. Larger values of $W$ imply that the follower accounts for more number of past actions by the leader and hence $\tilde{\rho}^{(L)}_t$, i.e., the follower's predicted leader action is less sensitive to its recent actions. When $W=1$, the follower only looks at the leader's latest action and predicts that the leader will repeat its action. As shown in Fig~\ref{fig:sim0}, regret incurred when $W=1$ is smaller than any $W> 1$. This is because the leader uses the upper confidence bound over each action, which is computed using the entire past history. We also note that $W=1$ may not be optimal when humans are involved, because unlike our partner-aware leader algorithm, humans tend to be more myopic in their decision making. The parameter $L$ denotes the number of times the leader repeats its local action. In all our experiments we fix $L=1$, and to simplify our analysis we assume $L>1$, specifically $L=2$. 

Before we provide the proof of our main result, we provide the following fact which shows that agents do not need to model each other when all agents have full reward observability.

\begin{fact}
Let $|\mc{A}_1| \!=\! |\mc{A}_2| \!=\! K$ and define $\Delta_a\!:=\! \mu_{a_{\ast}}\!-\!\mu_a$ for all $a \!\in\! \mc{A}$. Consider a decentralized team where $p_1 = p_2 = 1$, i.e., both agents have full reward observability. Then, the team reduces to a single agent MAB. Implementing UCB algorithm at single MAB agent achieves logarithmic regret~\cite{lattimore_szepesvari_2020}. Hence, if each agent implements the UCB algorithm locally, applying Theorem~7.1 in~\cite{lattimore_szepesvari_2020}, the team achieves logarithmic cumulative regret.
\end{fact}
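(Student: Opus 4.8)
The plan is to make rigorous the claim that full reward observability collapses the decentralized problem into an ordinary single-agent bandit on the product arm set $\mc{A} = \mc{A}_1 \times \mc{A}_2$, which has $K^2$ arms, and then to quote the classical UCB guarantee. The starting observation is that when $p_1 = p_2 = 1$ both agents always receive the true reward, i.e. $r^{(1)}_t(a_t) = r^{(2)}_t(a_t) = r^{\ast}_t(a_t)$, and by the problem setting each agent also observes the full history of past team actions $\{a_{\tau}\}_{\tau < t}$. Hence at every round the two agents possess \emph{identical} information: the same sequence of played team actions together with the same sequence of observed rewards.

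First I would fix a common deterministic tie-breaking rule (say lexicographic over $\mc{A}$) to be used inside every $\argmax$, including the initial ties created by $f_a(1,\delta) = \infty$. Then I would argue by induction on $t$ that, since both agents run UCB on the same team-action space from the same information, they maintain identical counts $n_a(t)$, identical empirical means $\hat{\mu}_a(t)$, and therefore identical confidence indices $f_a(t,\delta)$ for every $a \in \mc{A}$. Consequently both compute the same maximizer $\hat{a}_t = \argmax_{a \in \mc{A}} f_a(t,\delta)$; agent $1$ plays its coordinate $\hat{a}_t^{(1)}$ and agent $2$ plays $\hat{a}_t^{(2)}$, so the realized team action is exactly $\hat{a}_t$. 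Because the information sets again coincide at round $t+1$, the induction closes, and the joint play is indistinguishable from that of a single centralized UCB agent selecting arms from $\mc{A}$.

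It remains to invoke the single-agent guarantee: applying the standard UCB analysis (Theorem~7.1 of \cite{lattimore_szepesvari_2020}) to the $K^2$-armed instance with gaps $\Delta_a = \mu_{a_\ast} - \mu_a$ yields $\reg(T) = O\!\big(\sum_{a \neq a_\ast} (\log T)/\Delta_a\big)$, which is logarithmic in $T$. The only genuinely nontrivial point---and thus the main obstacle---is the consistency of decentralized selection: each agent controls only one coordinate, so any divergence in tie-breaking could desynchronize the two $\argmax$ computations and break the equivalence with centralized UCB. The shared deterministic tie-breaking rule together with the inductive invariant of identical statistics is precisely what rules this out; everything downstream is the textbook UCB computation and needs no repetition here.
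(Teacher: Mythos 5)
Your proposal is correct and takes essentially the same route as the paper: full observability makes both agents' information sets identical, so locally-run UCB over the team action space $\mc{A}$ coincides with a single centralized UCB agent on $K^2$ arms, and Theorem~7.1 of \cite{lattimore_szepesvari_2020} gives logarithmic regret. The paper asserts this reduction without detail, and your inductive argument with a shared deterministic tie-breaking rule is a sound way to make the one genuinely delicate step---synchronization of the two agents' $\argmax$ computations---rigorous.
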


However, when there is an agent with partial reward observability, if they do not model each other then the regret grows linearly with time, as we have seen in Fig.~\ref{fig:sim1}.

\section{Proof of Main Theorem}
\label{appendix:main_proof}
Let $u_{(i,j)}$ denote a positive integer to be defined later for each team action $(i,j) \in \mc{A}$. For all rows $i$ in leader's action space $\mc{A}_L$, define the optimal column with highest pay-off as
\begin{align}
    j^{\ast}(i) := \argmax_{j \in \mc{A}_F} \mu_{(i,j)}.
\end{align}
Define the following good event
\begin{align}
    G^{(F)}_{i}
    := & 
    \left\{
    \mu^{(F)}_{(i,j^{\ast}(i))} \!<\! \min_{t \in [T]} f^{(F)}_{(i,j^{\ast}(i))}(t, \delta)
    \right\}\bigcap \\
    & \left\{
    \bigcap_{j \in \mc{A}_F\setminus \{j^{\ast}(i)\}}
    \!\left\{
    \hat{\mu}^{(F)}_{(i, j)}(u_{(i, j)})+ \sqrt{\frac{2\log1/\delta}{u_{(i, j)}}}
    < \mu^{(F)}_{(i,j^{\ast}(i))}
    \right\}\right\}.
\end{align}
On the good event $G^{(F)}_{i}$, the mean value of optimal column in row $i$, i.e., $\mu^{(F)}_{(i, j^{\ast}(i))}$ will never be underestimated by the follower's upper confidence bound for the mean of action $(i, j^{\ast}(i))$. Furthermore, on event $G^{(F)}_{i}$ the follower's upper confidence bound obtained for the mean of action $(i,j)$ after $u_{(i, j)}$ observations are taken by the team is below the pay-off of the best action in the row $(i, j^{\ast}(i))$ when $j$ is a sub-optimal column. 

Recall that for the special case of $L=2$ in our partner-aware learning algorithm, leader takes each action twice. Thus, at odd time instants leader takes a new action according to its UCB and at even time instants it repeats the same action. Since $W = 1$, the follower predicts the leader's action correctly at every even time instant.

\begin{lemma}
\label{lemma:row_ucb}
Conditioned on the event $\bigcap_{i \in \mc{A}_L} G^{(F)}_i$, on even time instants the row sub-optimal columns will not be chosen by the team for more than $\sum_{i \in \mc{A}_L}\sum_{j \neq j^{\ast}(i)}u_{(i,j)}$ times.
\end{lemma}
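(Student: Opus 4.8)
The plan is to run the classical UCB ``a sub-optimal arm is pulled only a few times'' counting argument, but restricted to even time instants and applied row-by-row in the team-action matrix. The starting point is the observation stated just before the lemma: with $W=1$ and $L=2$ the follower predicts the leader's row correctly on every even instant. Consequently, on such an instant the team plays $(i, a^{(F)}_t)$ with $i = a^{(L)}_t$ the \emph{true} leader row and $a^{(F)}_t = \argmax_{j \in \mc{A}_F} f^{(F)}_{(i,j)}(t,\delta)$ genuinely maximizing the follower's upper confidence bound \emph{within the correct row} $i$. Thus on even instants the follower is really solving a per-row single-agent UCB problem, and I can analyze each row $i$ against its best column $j^{\ast}(i)$ exactly as in standard UCB.

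First I would fix a row $i \in \mc{A}_L$ and a sub-optimal column $j \neq j^{\ast}(i)$, and bound the number of even instants on which the team plays $(i,j)$ by $u_{(i,j)}$. Take any even instant $t$ at which $(i,j)$ is selected. The follower's selection rule gives $f^{(F)}_{(i,j)}(t,\delta) \geq f^{(F)}_{(i,j^{\ast}(i))}(t,\delta)$, and the first clause of $G^{(F)}_i$ gives $f^{(F)}_{(i,j^{\ast}(i))}(t,\delta) > \mu^{(F)}_{(i,j^{\ast}(i))}$ for every $t$; chaining these yields $f^{(F)}_{(i,j)}(t,\delta) > \mu^{(F)}_{(i,j^{\ast}(i))}$. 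The second clause of $G^{(F)}_i$ says that once $(i,j)$ has accumulated $u_{(i,j)}$ samples its confidence bound has dropped below $\mu^{(F)}_{(i,j^{\ast}(i))}$, and since the confidence radius $\sqrt{c^{(F)}\log(1/\delta)/n_{(i,j)}(t-1)}$ is decreasing in the sample count, this forces every even selection of $(i,j)$ to occur while $n_{(i,j)}(t-1) < u_{(i,j)}$. Because every play (even or odd) increments $n_{(i,j)}$ by exactly one, at most $u_{(i,j)}$ increments can occur before the count first reaches $u_{(i,j)}$, and each even selection of $(i,j)$ is one of these increments; hence there are at most $u_{(i,j)}$ such even selections.

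Summing over all rows $i \in \mc{A}_L$ and all sub-optimal columns $j \neq j^{\ast}(i)$ then gives the claimed bound $\sum_{i \in \mc{A}_L}\sum_{j \neq j^{\ast}(i)} u_{(i,j)}$, since every even instant on which a row-sub-optimal column is chosen corresponds to exactly one pair $(i,j)$ with $j \neq j^{\ast}(i)$, and conditioning on $\bigcap_{i \in \mc{A}_L} G^{(F)}_i$ lets me apply the per-row bound simultaneously for all $i$.

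The main obstacle is the step converting ``the bound of $G^{(F)}_i$ holds at exactly $u_{(i,j)}$ samples'' into ``the confidence bound stays below $\mu^{(F)}_{(i,j^{\ast}(i))}$ for every count at least $u_{(i,j)}$.'' In single-agent UCB one avoids this by inspecting the unique round on which the arm's count passes from $u_{(i,j)}-1$ to $u_{(i,j)}$, because the play and the reward-count move in lockstep. Here that lockstep is broken: $n_{(i,j)}$ is also incremented on odd instants, where the follower's misprediction can make the team land on $(i,j)$ \emph{without} the row-$i$ UCB condition holding, so at the decisive even instant $n_{(i,j)}(t-1)$ may already exceed $u_{(i,j)}$ and the empirical mean over that larger sample is not directly controlled by the single inequality in $G^{(F)}_i$. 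I expect to close this gap by reading the second clause of $G^{(F)}_i$ as a uniform statement over sample sizes $s \geq u_{(i,j)}$ --- which is precisely what the underlying concentration inequality (used later to pick $u_{(i,j)}$ and to lower-bound $\P[G^{(F)}_i]$) delivers --- so that the empirical mean plus the shrinking confidence radius remains below $\mu^{(F)}_{(i,j^{\ast}(i))}$ for all counts past $u_{(i,j)}$, not merely at $u_{(i,j)}$. With that uniform reading the monotonicity argument above goes through, and the odd-instant increments become harmless for the even-instant count.
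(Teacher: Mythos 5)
Your proposal takes the same route as the paper's own proof: the classical single-agent UCB contradiction argument, applied per row and restricted to even instants (where $W=1$, $L=2$ guarantee the follower's row prediction is correct), using the first clause of $G^{(F)}_i$ to keep the optimal column's index above $\mu^{(F)}_{(i,j^{\ast}(i))}$ and the second clause to push a sub-optimal column's index below it once it has $u_{(i,j)}$ samples, then summing over all pairs $(i,j)$ with $j \neq j^{\ast}(i)$.

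The one point where you go beyond the paper is the obstacle you flag at the end, and it is a genuine one. The paper's proof supposes that at the decisive even instant $n_{(i,j)}(t-1) = u_{(i,j)}$ exactly; in single-agent UCB this is automatic because an arm's count increments only when that arm is chosen by the rule under analysis, but here $n_{(i,j)}$ also increments on odd instants where a misprediction can land the team on $(i,j)$ without the row-$i$ UCB rule sanctioning it, so the count can overshoot $u_{(i,j)}$ between even selections. Since $G^{(F)}_i$ as written controls the empirical mean only at exactly $u_{(i,j)}$ samples, the paper's step $(a)$--$(b)$ chain does not literally cover an even selection occurring at a larger count; the paper glosses over this. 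Your patch --- strengthening the second clause to hold uniformly over all sample counts $s \geq u_{(i,j)}$ --- is the right one, and it is essentially free: with the paper's choice $u_{(i,j)} = \lceil 8\log(1/\delta)/(p^2\tilde{\Delta}^2_{(i,j)})\rceil$, each Chernoff term at sample size $s \geq u_{(i,j)}$ is at most $\delta$, so a union bound over $s \leq T$ contributes at most $T\delta$ per action, the same order as the $T\delta$ terms already present in the paper's bound on $\P(G^c)$, leaving the logarithmic regret guarantee intact up to constants. So your argument is not merely equivalent to the paper's; it is a more careful version of it.
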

\begin{proof}
Suppose event $G^{(F)}_{i}$ holds true. Suppose there exists some even time instant $t$ where leader chooses row $i$, we have $n_{(i, j)}(t-1) = u_{(i, j)}$ for all sub-optimal columns $j \in \mc{A}_F\setminus \{j^{\ast}(i)\}$ and follower chooses a sub-optimal column $j \in \mc{A}_F \setminus \{j^{\ast}(i)\}$. Hence, team action $a_{t} = (i, j)$ gets played at time $t$. Then, we get
\begin{align}
    f^{(F)}_{(i, j)}(t, \delta)
    & = \hat{\mu}^{(F)}_{(i, j)}(t-1) + \sqrt{\frac{2\log 1/\delta}{n_{(i, j)}(t-1)}} \\
    & \overset{(a)}= \hat{\mu}^{(F)}_{(i, j)}(u_{(i, j)}) + \sqrt{ \frac{2\log 1/\delta}{u_{(i, j)}}} \overset{(b)} < \mu^{(F)}_{(i,j^{\ast}(i))} \!\overset{(c)} <\! f^{(F)}_{(i,j^{\ast}(i))}(t, \delta),
    \label{eq:row_ucb_contradiction}
\end{align}
where $(a)$ follows from the assumption that $n_{(i, j)}(t-1) = u_{(i, j)}$, $(b)$ and $(c)$ follow from the definition of the event $G^{(F)}_{i}$. The inequality in~\eqref{eq:row_ucb_contradiction} is a contradiction to the fact that a sub-optimal column $j$ of row $i$ was played at time $t$. In other words, sub-optimal action $(i,j)$ on the event $G^{(F)}_i$ will be played at most $u_{(i,j)}$ times given the leader chooses the sub-optimal row $i$. Hence, the sub-optimal columns of the action matrix will not be played for more than $\sum_{i \in \mc{A}_L}\sum_{j \neq j^{\ast}(i)} u_{(i,j)}$ on even time instants. 
\end{proof}

After $\sum_{j \neq j^{\ast}(i)} u_{(i,j)}$ even time instants, for all future even time instants whenever the leader chooses row $i$, follower will choose the optimal column $j^{\ast}(i)$. Now, we want to show that the leader explores the action $(i, j^{\ast}(i))$ for at most $u_{(i, j^{\ast}(i))} $ time steps. For all $i \in \mc{A}_L \setminus \{a^{(L)}_{\ast}\}$, define 
\begin{align}
    G^{(L)}_{i}
    = 
    &\left\{
    \mu^{(L)}_{(i,j^{\ast}(i))} < \min_{t \in [T]} f^{(L)}_{(i,j^{\ast}(i))}(t, \delta)
    \right\} \bigcap \\
    & \left\{
    \bigcap_{j \in \mc{A}_F \setminus \{j^{\ast}(i)\}}
    \left\{
    \hat{\mu}^{(L)}_{(i, j)}(u_{(i, j)})+ \sqrt{\frac{2\log1/\delta}{u_{(i, j)}}}
    < \mu^{(L)}_{(i,j^{\ast}(i))}
    \right\}\right\}\bigcap 
    \\
    & \left\{
    \mu^{(L)}_{a^{\ast}} < \min_{t \in [T]} f^{(L)}_{a^{\ast}}(t, \delta)
    \right\} \bigcap
    \left\{
    \hat{\mu}^{(L)}_{(i, j^{\ast}(i))}(u_{(i, j^{\ast}(i))})+ \sqrt{\frac{2\log1/\delta}{u_{(i, j^{\ast}(i))}}}
    < \mu^{(L)}_{a^{\ast}}
    \right\}.
\end{align}
The good event $G^{(L)}_{i}$ for the leader has events similar to the good event $G^{(F)}_{i}$ for follower and some additional events. On the event $G^{(L)}_{i}$, the mean value of optimal column in row $i$, i.e., $\mu^{(L)}_{(i, j^{\ast}(i))}$ will never be underestimated by the leader's upper confidence bound for the mean of action $(i, j^{\ast}(i))$. Furthermore, on event $G^{(L)}_{i}$ the leader's upper confidence bound obtained for the mean of action $(i,j)$ after $u_{(i, j)}$ observations are taken by the team is below the pay-off of the best action in the row $(i, j^{\ast}(i))$ when $j$ is a sub-optimal column. Additionally, on this event the leader's upper confidence bound for the optimal action is never underestimated by the leader and the leader's upper confidence bound for the optimal action in the row is below the pay-off of the optimal team action.

\begin{lemma}
\label{lemma:leader_ucb}
Conditioned on the event $\bigcap_{i \neq a^{(L)}_{\ast}} G^{(L)}_i \bigcap G^{(F)}_i$, the sub-optimal rows will be chosen for at most $2\sum_{i \neq a^{(L)}_{\ast}} \sum_{j \in \mc{A}_F} u_{(i,j)}$ time instants.
\end{lemma}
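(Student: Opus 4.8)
The plan is to mirror the structure of Lemma~\ref{lemma:row_ucb}, but now controlling the leader's choice of \emph{rows} rather than the follower's choice of columns, and to exploit the $L=2$ repetition so that each fresh leader decision is charged two time steps (this is the source of the factor $2$). Throughout I condition on $\bigcap_{i \neq a^{(L)}_{\ast}}\left(G^{(L)}_i \cap G^{(F)}_i\right)$.

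First I would record the two consequences of $G^{(L)}_i$ that drive the argument. On this event the optimal team action is never underestimated, i.e. $\mu^{(L)}_{a^{\ast}} < f^{(L)}_{a^{\ast}}(t,\delta)$ for every $t$, so whenever the leader selects a sub-optimal row $i$ its UCB-maximizing team action $(i,j)$ must satisfy $f^{(L)}_{(i,j)}(t,\delta) \geq f^{(L)}_{a^{\ast}}(t,\delta) > \mu^{(L)}_{a^{\ast}}$. Second, $G^{(L)}_i$ guarantees that once a team action in row $i$ has been played enough, its leader-UCB drops below $\mu^{(L)}_{a^{\ast}}$: for a sub-optimal column $j \neq j^{\ast}(i)$ this happens after $u_{(i,j)}$ plays, since then $f^{(L)}_{(i,j)} < \mu^{(L)}_{(i,j^{\ast}(i))} < \mu^{(L)}_{a^{\ast}}$, and for the row-optimal column it happens after $u_{(i,j^{\ast}(i))}$ plays, since then $f^{(L)}_{(i,j^{\ast}(i))} < \mu^{(L)}_{a^{\ast}}$. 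By the same contradiction used in Lemma~\ref{lemma:row_ucb}, such an action can no longer be the leader's argmax, so the leader keeps re-selecting row $i$ only while \emph{some} column $(i,j)$ still has $n_{(i,j)} < u_{(i,j)}$.

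Next I would bring in the even instants together with Lemma~\ref{lemma:row_ucb}. Because $L=2$ and $W=1$, on every even instant the follower predicts the leader's current row correctly and responds with its own UCB-best column in that row. Lemma~\ref{lemma:row_ucb} already bounds the follower's even-instant plays of sub-optimal columns of row $i$ by $\sum_{j \neq j^{\ast}(i)} u_{(i,j)}$; once these are exhausted the follower plays $j^{\ast}(i)$ whenever the leader is in row $i$, forcing the realized even-instant action to become $(i,j^{\ast}(i))$. Its count then accumulates and, by the previous paragraph, the leader's UCB for $(i,j^{\ast}(i))$ falls below $\mu^{(L)}_{a^{\ast}}$ after at most $u_{(i,j^{\ast}(i))}$ such plays. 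At that moment every column of row $i$ has leader-UCB below $f^{(L)}_{a^{\ast}}$, so the leader abandons row $i$. Hence the number of even instants spent in a sub-optimal row $i$ is at most $\sum_{j \neq j^{\ast}(i)} u_{(i,j)} + u_{(i,j^{\ast}(i))} = \sum_{j \in \mc{A}_F} u_{(i,j)}$, and since each even-instant play of row $i$ is paired with the immediately preceding odd-instant play of the same row, the total time spent in row $i$ is at most $2\sum_{j} u_{(i,j)}$; summing over $i \neq a^{(L)}_{\ast}$ yields the claim.

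The main obstacle is exactly the coupling that makes this a multi-agent rather than a single-agent UCB bound: the leader controls only the row, so ``the leader's argmax is $(i,j)$'' does not imply that $(i,j)$ is actually played, and one must rule out a persistent mismatch in which the leader keeps re-selecting row $i$ because an under-played column carries an inflated UCB while the follower, driven by its own statistics, never plays that column and thus never drives the leader's UCB down. The resolution is that the follower's own UCB exploration cannot leave a column of row $i$ both under-played and unvisited: an unexplored column has infinite follower-UCB and is selected until its count grows, which is precisely what also collapses the corresponding leader-UCB. Making this simultaneous collapse of every row-$i$ leader-UCB quantitative within the stated budget is where $G^{(F)}_i$ and $G^{(L)}_i$ must be invoked together, and it is the step demanding the most care; the $L=2,\,W=1$ structure is what keeps it tractable, since it guarantees correct follower predictions on half the horizon.
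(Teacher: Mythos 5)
Your positive counting argument is the paper's own: Lemma~\ref{lemma:row_ucb} disposes of the follower's sub-optimal columns, the two contradictions packaged in $G^{(L)}_i$ (a sub-optimal column whose count has reached $u_{(i,j')}$ cannot be the leader's argmax because of $(i,j^{\ast}(i))$, and $(i,j^{\ast}(i))$ at count $u_{(i,j^{\ast}(i))}$ cannot be the leader's argmax because of $a^{\ast}$) dispose of the leader, and the factor $2$ comes from pairing each even instant with the preceding odd instant under $L=2$. On this part you and the paper agree step for step.

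The genuine gap is the step you yourself call the crux, and your resolution of it does not work. The claim that ``an unexplored column has infinite follower-UCB and is selected until its count grows, which is precisely what also collapses the corresponding leader-UCB'' holds only at count zero. Once $(i,j')$ has been played some $n'\geq 1$ times with $n' < u_{(i,j')}$, the conditioning event says nothing about either agent's empirical mean at count $n'$: $G^{(F)}_i$ and $G^{(L)}_i$ constrain the empirical means of $(i,j')$ only at count exactly $u_{(i,j')}$, and their never-underestimate clauses protect only $(i,j^{\ast}(i))$ and $a^{\ast}$. Hence the following realization is consistent with the event you condition on: the follower's observations of $(i,j')$ are unluckily poor, so that $f^{(F)}_{(i,j')} < \mu^{(F)}_{(i,j^{\ast}(i))} < f^{(F)}_{(i,j^{\ast}(i))}(t,\delta)$ for all $t$ and the follower never plays $j'$ again; simultaneously the leader's observations of $(i,j')$ are luckily good, so that $f^{(L)}_{(i,j')}$ exceeds every other leader-UCB forever. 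Then the leader re-selects row $i$ at every step, the follower answers with $j^{\ast}(i)$, the count $n_{(i,j')}$ stays frozen below $u_{(i,j')}$, and the team plays the sub-optimal row an unbounded number of times; your assertion ``at that moment every column of row $i$ has leader-UCB below $f^{(L)}_{a^{\ast}}$'' fails, and the stated bound cannot be deduced. To be fair, you are not worse off than the paper: its proof silently makes the same leap when it asserts that there exists a time at which $n_{(i,j)}(t-1)=u_{(i,j)}$ for every column $j$ of row $i$ --- Lemma~\ref{lemma:row_ucb} only upper-bounds plays; it does not force counts to reach their budgets. But your patch does not close the hole either: closing it would require controlling both agents' empirical means at all intermediate counts (anytime confidence bounds) and then showing that the count up to which the leader's optimism for $(i,j')$ can survive (of order $\log(1/\delta)/\bigl(p_{\text{max}}(\mu_{a^{\ast}}-\mu_{(i,j')})\bigr)^2$) is dominated by the count up to which the follower is still guaranteed to keep exploring $(i,j')$ --- an argument that appears in neither your proposal nor the paper.
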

\begin{proof}
From Lemma~\ref{lemma:row_ucb}, we know that conditioned on the event $G^{(L)}_i \bigcap G^{(F)}_i$ there exists a time instant $t$ such that $n_{(i,j)}(t-1) = u_{(i,j)}$ for all $j \in \mc{A}_F\setminus \{j^{\ast}(i)\}$ and $n_{(i, j^{\ast}(i))}(t-1) = u_{(i, j^{\ast}(i))}$. This is true because even if the optimal column in row $i$ is never explored until all the sub-optimal columns are explored, we know that the sub-optimal columns will be explored for at most $\sum_{j \neq j^{\ast}(i)} u_{(i,j)}$ even time instants. After this point, the follower will choose the optimal column at even time instants whenever the leader chooses row $i$. Hence, now we want to show that optimal column of a sub-optimal row $i$ will be explored at most $u_{(i, j^{\ast}(i))}$ times. Now suppose at time $t$, team chooses action $(i,j)$, this will happen if the leader chooses an action $(i,j')$ for some $j' \in \mc{A}_F\setminus \{j^{\ast}(i)\}$ or if the leader chooses the optimal column in the row $(i, j^{\ast}(i))$. Consider the first case, then we get
\begin{align}
    f^{(L)}_{(i, j')}(t, \delta)
    & = \hat{\mu}^{(L)}_{(i, j')}(t-1) + \sqrt{\frac{2\log 1/\delta}{n_{(i, j')}(t-1)}}
    \\
    & \overset{(a)}= \hat{\mu}^{(L)}_{(i, j')}(u_{(i, j')}) + \sqrt{ \frac{2\log 1/\delta}{u_{(i, j')}}} \overset{(b)} < \mu^{(L)}_{(i,j^{\ast}(i))} \overset{(c)} < f^{(L)}_{(i,j^{\ast}(i))}(t,\delta),
    \label{eq:leader_ucb1}
\end{align}
where $(a)$ follows from the assumption that $n_{(i, j')}(t-1) = u_{(i, j')}$, $(b)$ and $(c)$ follow from the definition of the event $G^{(L)}_{i}$. The inequality in~\eqref{eq:leader_ucb1} is contradiction to the fact that the leader chose a sub-optimal column in the row. Now consider the second case where the leader chooses the optimal column $(i, j^{\ast}(i))$. Then, we get
\begin{align}
    f^{(L)}_{(i, j^{\ast}(i))}(t, \delta)
    & = \hat{\mu}^{(L)}_{(i, j^{\ast}(i))}(t-1) + \sqrt{\frac{2\log 1/\delta}{n_{(i,j^{\ast}(i))}(t-1)}}
    \\
    & \overset{(a)}= \hat{\mu}^{(L)}_{(i, j^{\ast}(i))}(u_{(i, j^{\ast}(i))}) + \sqrt{ \frac{2\log 1/\delta}{u_{(i, j^{\ast}(i))}}} \overset{(b)} < \mu^{(L)}_{a^{\ast}} \overset{(c)} < f^{(L)}_{a^{\ast}}(t,\delta),
    \label{eq:leader_ucb2}
\end{align}
where $(a)$ follows from the assumption that $n_{(i, j^{\ast}(i))}(t-1) = u_{(i, j^{\ast}(i))}$, $(b)$ and $(c)$ follow from the definition of the event $G^{(L)}_{i}$. The inequality in~\eqref{eq:leader_ucb2} is contradiction to the fact that the leader chose the optimal column in the row.  In other words, it takes $\sum_{i \neq a^{(L)}_{\ast}} \sum_{j \in \mc{A}_F} u_{(i,j)}$ time instants to never choose sub-optimal rows in future time instants. In the worst case, it will take at most $2\sum_{i \neq a^{(L)}_{\ast}} \sum_{j \in \mc{A}_F} u_{(i,j)}$ time instants for the team to explore all the sub-optimal rows. 
\end{proof}

Define an overall good event for the team as 
\begin{align}
    G:= \left\{\bigcap_{i \neq a^{(L)}_{\ast}} G^{(L)}_i \bigcap G^{(F)}_i \right\} \bigcap G^{(F)}_{a^{(L)}_{\ast}}.
\end{align}
Then, we can write
\begin{align}
    \expe\left[\sum_{a \neq a_{\ast}}n_{a}(T)\right] &= \expe\left[\sum_{a \neq a^{\ast}} n_{a}(T)\indicate{G}\right] + \expe\left[\sum_{a \neq a^{\ast}} n_{a}(T)\indicate{G^c}\right]
    \\
    & \overset{(a)}\leq 2 \sum_{i \neq a^{(L)}_{\ast}}\sum_{j \in \mc{A}_F} u_{(i,j)}
    + 2 \sum_{j \neq a^{(F)}_{\ast}} u_{(a^{(L)}_{\ast},j)}
    + T \P(G^c),
    \label{eq:num_sub_opt_arms}
\end{align}
where $(a)$ follows from Lemma~\ref{lemma:row_ucb} and Lemma~\ref{lemma:leader_ucb}.

Next, we choose the constants $u_{(i,j)}$ for all team actions $(i,j)$. For any $i \in \mc{A}_L$ and $j \neq j^{\ast}(i)$, set 
\begin{align}
    u_{(i,j)} = \left\lceil\frac{8\log 1/\delta}{p^2_{\text{min}}\tilde{\Delta}^2_{(i,j)}} \right\rceil,
\end{align}
where we define $\tilde{\Delta}_{(i,j)} = \mu_{(i,j^{\ast}(i))}- \mu_{(i,j)}$ and for $i \neq a^{(L)}_{\ast}$, set
\begin{align}
    u_{(i,j^{\ast}(i))} = \left\lceil\frac{8\log 1/\delta}{p^2_{\text{max}}\Delta^2_{(i,j^{\ast}(i))}} \right\rceil,
\end{align}
where recall that $\Delta_{(i,j^{\ast}(i))} = \mu_{a^{\ast}}- \mu_{(i,j^{\ast}(i))}$. Similar to the analysis of UCB algorithm presented in Theorem~7.1~in~\cite{lattimore_szepesvari_2020}, using Chernoff bound along with union bound we get
\begin{align}
    \P(G^c) 
    \leq & \sum_{i \neq a^{(L)}_{\ast}} \left\{T\delta + \exp\left( -\frac{u_{(i,j^{\ast}(i))}p^2_{\text{max}} \Delta^2_{(i,j^{\ast}(i))} }{8}\right)\right\} + \\
    &\sum_{i \in \mc{A}_L}\sum_{j \neq j^{\ast}(i)}\! \left\{T\delta + \exp\left(\! -\frac{u_{(i,j)}p^2_{\text{min}}\tilde{\Delta}^2_{(i,j)} }{8}\!\right)\right\}.
\end{align}
Substituting the above inequality in \eqref{eq:num_sub_opt_arms}, using $\delta = \frac{1}{T^2}$, we can write the cumulative regret for the team as follows 
\begin{align}
    &\reg(T) 
    = \sum_{a \neq a_{\ast}}\frac{(p_{\text{max}}+p_{\text{min}})}{2}\Delta_a \expe[n_{a}(T)] \leq \frac{(p_{\text{max}}+p_{\text{min}})\Delta_{\text{max}}}{2} \expe\left[\sum_{a \neq a_{\ast}}n_{a}(T)\right] 
    \\
    &\leq 
    (p_{\text{max}}+p_{\text{min}})\Delta_{\text{max}}\left[\sum_{i \neq a^{(L)}_{\ast}} \frac{16}{p^2_{\text{max}}\Delta^2_{(i, j^{\ast}(i))}} \log T + \sum_{i \in \mc{A}_L} \sum_{j \neq j^{\ast}(i)}\frac{16}{p^2_{\text{max}}\tilde{\Delta}^2_{(i, j)}}  \log T + \frac{3\lvert\mc{A}_L\rvert\lvert\mc{A}_F\rvert}{2}\right].
\end{align}


\section{Partner-Aware UCB with Knowledge Gradient}\label{appendix:ucb_matched_with_kg}
The work by \cite{zhang2013cheap} has found that humans' behavior in multi-armed bandit problems are best captured by knowledge gradient algorithms among the many they have experimented. Therefore, we tested our Partner-Aware UCB follower agent with a leader agent who follows knowledge gradient. The idea behind knowledge gradient is as follows: the agent assumes it only has one free action left and it will then have to keep pulling the same arm which it thinks to be the optimal, i.e. it will keep exploiting. By calculating the expected return under the initial free action based on the posterior distributions at that time, the agent decides on what action to take. It then repeats the same procedure in all time steps.

\begin{figure}[h]
    \centering
    \includegraphics[width=.3\textwidth]{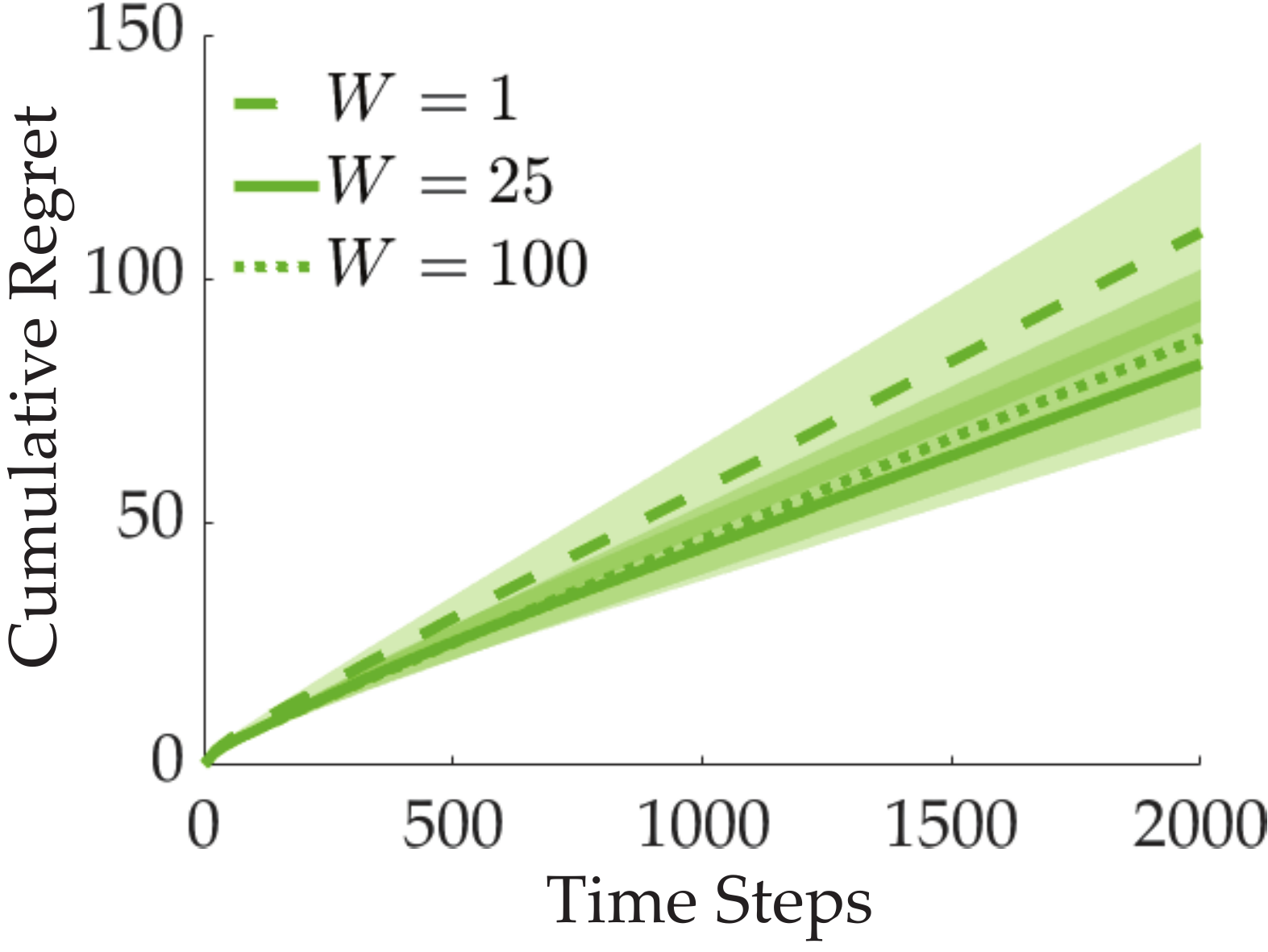}
    \caption{Regret values are averaged over $100$ runs for varying $W$ when Partner-Aware UCB follower is collaborating with a knowledge gradient leader.}
    \label{fig:apx1}
\end{figure}

Therefore, we ran simulations using the same setup as in Sec.~\ref{subsec:sim0}, and varying $W$. Fig.~\ref{fig:apx1} shows the incurred regret over time. Due to the suboptimalities of knowledge gradient, all three $W$ values led to linear regret. Nevertheless, $W=25$ performed the best. Moreover, our experiments presented in Sec.~\ref{sec:exp} empirically demonstrate humans often achieve sublinear regret. Further research may investigate the conditions when humans manage to find the optimal action in finite time.

\section{Additional Simulations}\label{appendix:very_naive_ucb}
One may wonder what happens if agents employ single-agent UCB that completely ignores the multi-agent aspects of the problem. Put another way, what happens if agents choose their actions as if the action space only consists of their actions, as opposed to Na\"ive UCB where agents are aware of the multi-agent formulation of the problem but assumes the other agent is going to comply? We name this version ``Very Na\"ive UCB", as it completely ignores the existence of the other agent. We implemented this as an additional baseline and ran simulations in the same setup as Fig.~\ref{fig:sim1} (right), but for a longer interaction to better highlight the difference. We present the results in Fig.~\ref{fig:vs_verynaiveucb}, which shows Partner-Aware UCB significantly outperforms Very Na\"ive UCB, too.

\begin{figure}[h]
    \centering
    \includegraphics[width=.3\textwidth]{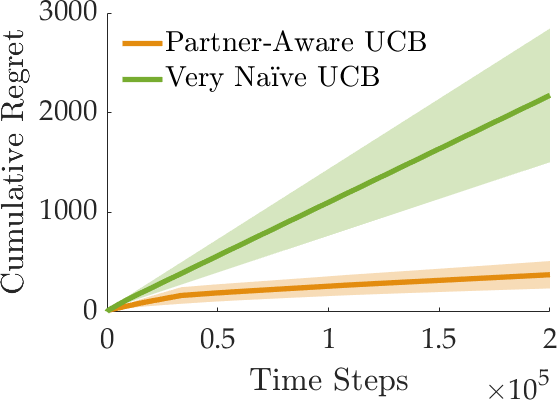}
    \caption{Cumulative regrets over $100$ runs for different algorithms with random reward means.}
    \label{fig:vs_verynaiveucb}
\end{figure}

\section{Additional Analysis on Burger Stacking Robot Experiments}
\label{appendix:burger_stacking_additional}
Having conducted the in-lab experiments with the actual robot first, we realized there is a significant difference between the team performances depending on whether the subject is an AI researcher or not. Specifically, 19 of the subjects are researchers in AI while the other 39 come from various other backgrounds. In Fig.~\ref{fig:researchers_vs_nonresearchers}, we show this difference.

\begin{figure}[h]
    \centering
    \includegraphics[width=.45\textwidth]{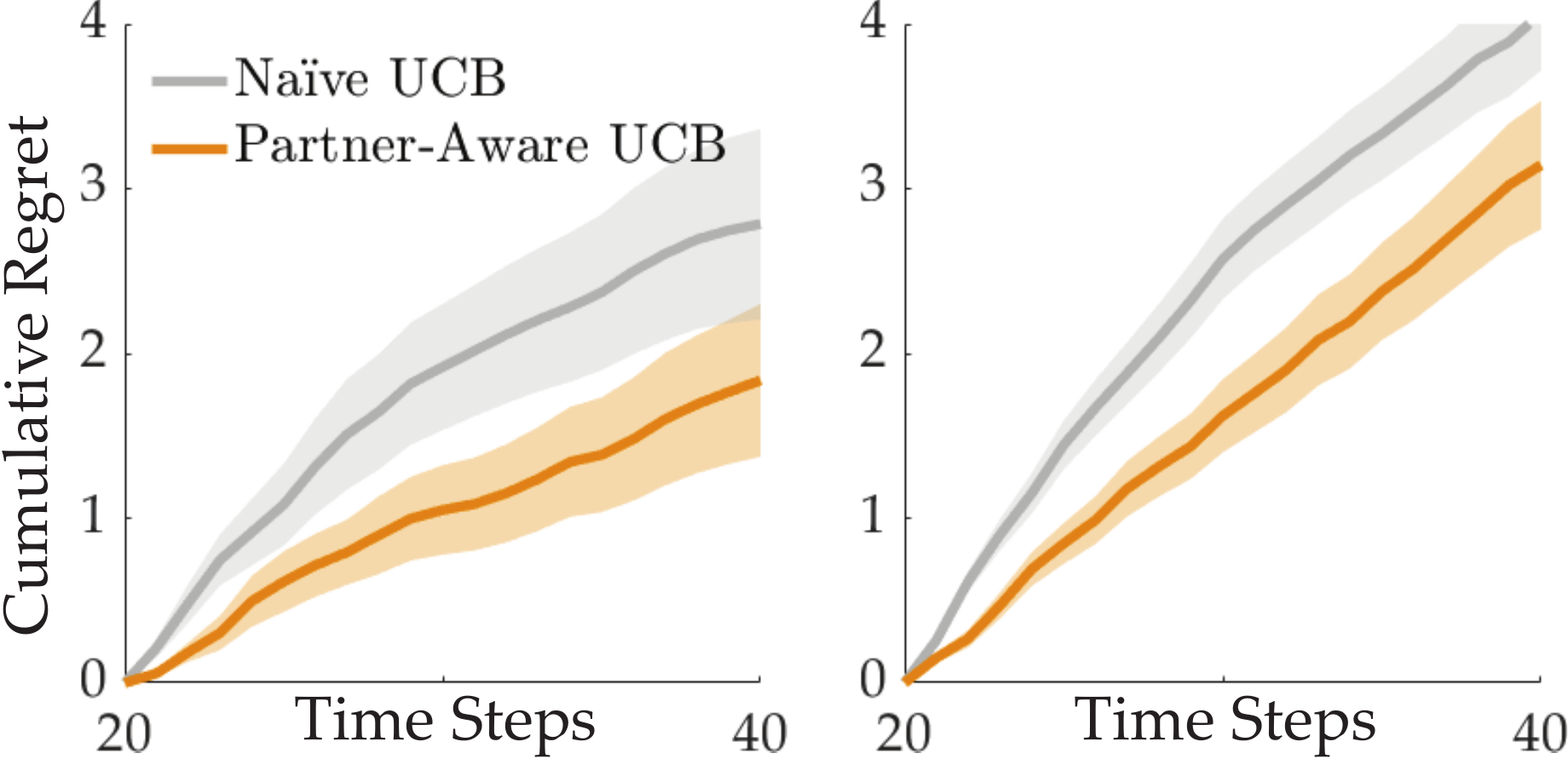}
    \caption{Cumulative regret over time for (left) AI researchers and (right) the others.}
    \label{fig:researchers_vs_nonresearchers}
\end{figure}

It can be seen that when paired with Na\"ive-UCB, AI researchers incur a cumulative regret of $2.8\pm0.58$ whereas other people incur $4.1\pm0.35$. Similarly, when paired with Partner-Aware UCB, the cumulative regret values are $1.8\pm0.47$ and $3.1\pm0.39$, respectively. In both cases, the teams with AI researchers perform significantly better ($p<0.05$, two-sample $t$-test).

This difference between the populations may imply that humans might be employing different algorithms or strategies depending on how familiar they are with the problem setup. While Partner-Aware UCB outperforms Na\"ive UCB in both cases, this observation opens new possibilities for research: it might be possible to develop and use different algorithms based on the end-users even in the environments that are as abstract as bandit problems.

\section{Online Casino Study}
\label{appendix:casino_section}
We conduct an online human-subject study to investigate longer interactions between the agents. In this experiment, the humans collaborate with the AI agent for $1000$ time steps at each episode.

\smallskip
\noindent\textbf{Experimental Setup.} We designed a simple casino interface with four slot machines, placed on a $2$-by-$2$ grid. 
Human subjects were told they are in a casino with a budget of $1000$ units that should all be spent on these slot machines, and they can only select the row in the $2$-by-$2$ grid. The column is automatically selected by the AI agent which is not aware of the human's selection at that time, until the team's selection is revealed to both the human and the AI agent. Each human action costs $1$ unit and there is a fixed probability of earning a coin from each machine. After selecting the slot machine, the AI agent and the participant are informed about whether they earned a coin. However, the AI agent observes only $40\%$ of the coins and thinks the others resulted in no earnings (human's observability is $p_1=1$ and AI agent's $p_2=0.4$). The goal of both the human and the AI agent is to maximize the total number of coins earned together.

\smallskip
\noindent\textbf{Independent Variables.} We varied the algorithm the AI agent is using to collaborate with the human partner with two algorithms: Na\"ive UCB and Partner-Aware UCB. For both algorithms we set, when relevant, $L=1$, $W=5$ and $c^{(L)}=c^{(F)}=0.01$.

\smallskip
\noindent\textbf{Procedure.} We conducted an online within-subjects study with $24$ participants ($11$ female, $13$ male, ages 18 -- 58). None of the participants had prior experience with the experiment interface. Hence, they were given a chance to experience the setup in a trial casino whose reward means for each machine were randomly chosen. After the trial casino, each participant played in $50$ casinos ($25$ with each algorithm) in each of which they collaborated with the AI agent to select slot machines $1000$ times. The participants knew these numbers in advance, which potentially helped them in deciding on their exploration strategy. The keyboard controls helped them complete each casino within a minute.

  \begin{figure}[ht]
    \centering
    \begin{subfigure}[t]{.48\textwidth}
        \centering
        \includegraphics[width=\linewidth]{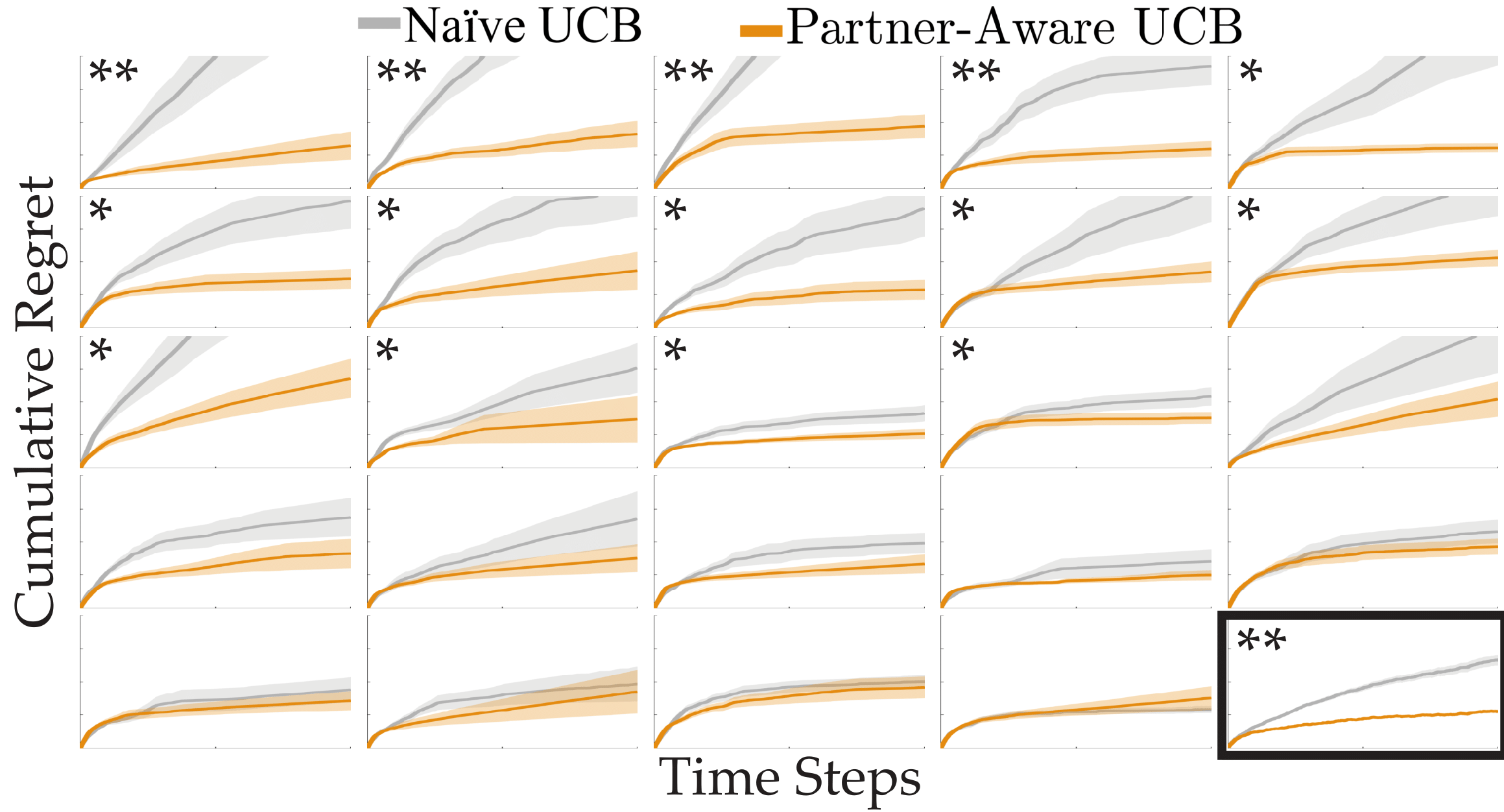}
        \caption{Cumulative regret over $25$ casinos: each plot shows the results of one user. The last plot is the average over both $25$ casinos and $24$ users.}
        \label{fig:users}
    \end{subfigure}%
    \hfill        
    \begin{subfigure}[t]{.48\textwidth}
        \centering
        \includegraphics[width=\linewidth]{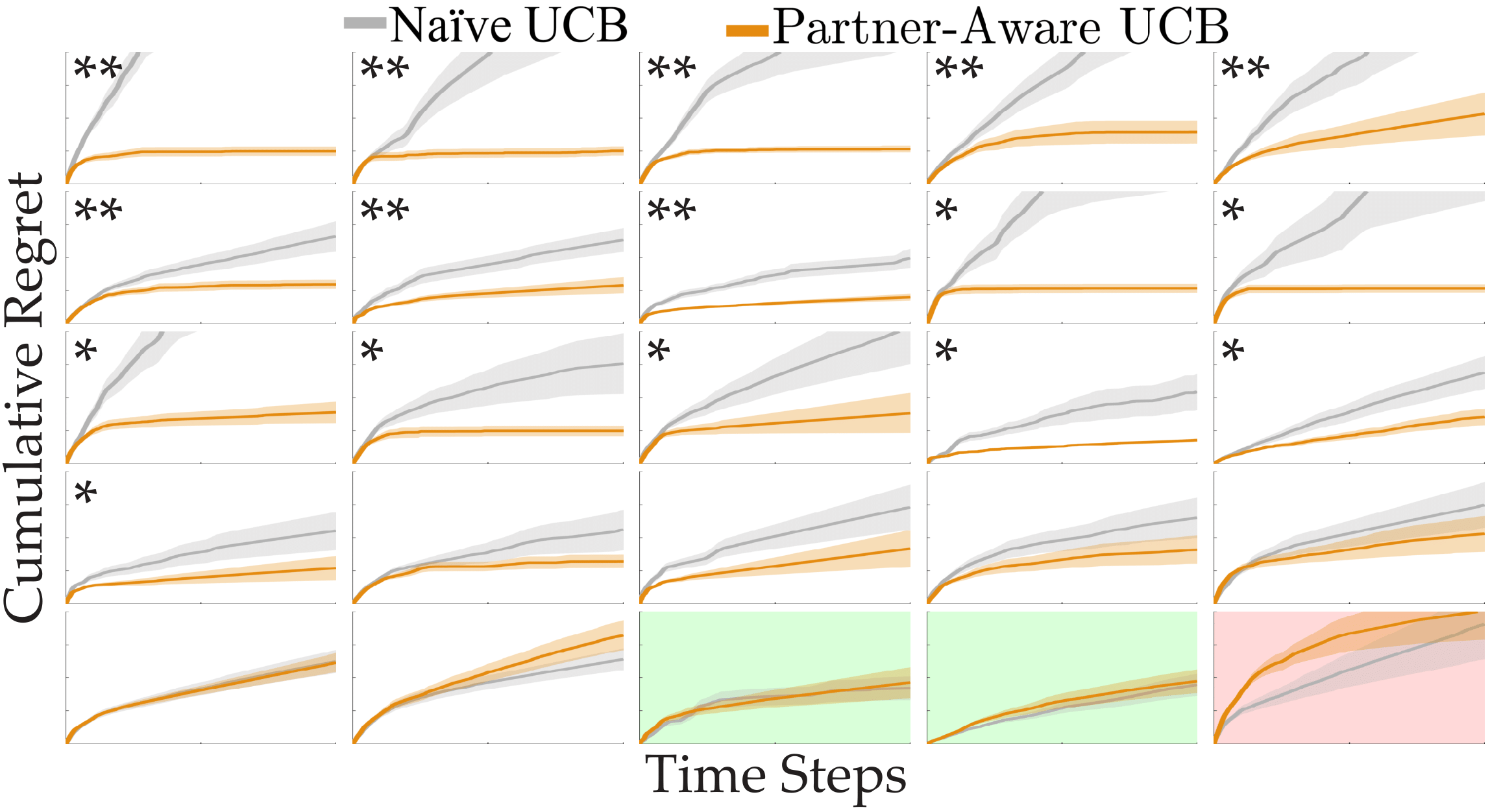}
        \caption{Cumulative regret over $24$ users: each plot shows the results of one casino with different reward means.}
        \label{fig:casinos}
    \end{subfigure}
\caption{Results of Online Casino Study}
\label{fig:casino_study_results}
\end{figure}

For fair comparison, we selected the same $r^*_t$ throughout the $1000$ time steps for the two sets of $25$ casinos. However, the participants did not know this, and the order of algorithms the participants partner with first was randomized.


The interface provided the participants the information about: the number of lucky (resulting in a coin) and unlucky selections for each machine, the total number of machines selected so far in the current casino, the most recently selected machine and whether it led to earning a coin.

\smallskip
\noindent\textbf{Dependent Measures.} As an objective measure, we report the cumulative regret at each casino. We also gave the participants a $5$-point rating scale survey ($1$-Strongly Disagree, $5$-Strongly Agree) consisting of $5$ questions for each algorithm, analogous to the burger stacking experiments: ``I was usually able to select the machine I wanted" (\emph{Ability}), ``The AI agent insisted on some suboptimal machines" (\emph{Insisting}), ``The AI agent was easy to collaborate with" (\emph{Easy}), ``The AI agent was annoying" (\emph{Annoying}), and ``I could earn more coins if I were playing alone" (\emph{Alone}).


\clearpage
\noindent\textbf{Hypotheses.}
\begin{quote}
    \textbf{H3.} \textit{Partner-Aware UCB algorithm will help the users earn more coins, and lead to lower regrets.}\\
    \textbf{H4.} \textit{Users will subjectively perceive the Partner-Aware UCB robot as a better partner.}
\end{quote}

\noindent\textbf{Results-Objective.} We report the cumulative regret for each participant and casino in Figs.~\ref{fig:users} and \ref{fig:casinos}, respectively. The last plot of Fig.~\ref{fig:users} shows the average over both casinos and users.

For $23$ of the $24$ participants, the Partner-Aware UCB helped achieving lower regret with statistical significance for $14$ of the participants (paired-sample $t$-test, $p<.005$ for $4$ users denoted with double asterisks in the figure, and $0.005\leq p<.05$ for $10$ users denoted with a single asterisk). To avoid $p$-hacking, we also performed a two-way repeated measures ANOVA, which again resulted in $p<.005$ between the algorithms.

Similarly, in $21$ of the $25$ casinos, the users incurred lower regret with the Partner-Aware UCB. The regrets are comparable for the remaining $4$ and this can be explained with either the casino being very difficult and thus both algorithms incurring high regrets, or the casino being so easy that both algorithms quickly find the optimal arm with most participants (e.g. the plots with green background). An example of a difficult casino is highlighted with a red background, where both algorithms receive high regrets -- however, Partner-Aware UCB has a sublinear trend and the regret with Na\"ive UCB is increasing linearly, so Partner-Aware UCB could potentially outperform if there were more time steps. Out of the $21$ casinos where Partner-Aware UCB outperformed, the comparison is statistically significant in $16$ casinos ($p\!<\!0.005$ in $8$ and $0.005\!\leq\! p\!<\!0.05$ in the other $8$).

Aligned with the regret values, the Partner-Aware UCB robot also led to higher earnings. While the participants earned $18,\!825.8\pm39.3$ coins over all $25$ casinos with Partner-Aware UCB, this number is only $17,\!717.2\pm235.0$ with Na\"ive UCB. Together with the results on cumulative regret, these strongly support \textbf{H3}.

\begin{figure}[h]
    \centering
    \includegraphics[width=0.5\textwidth]{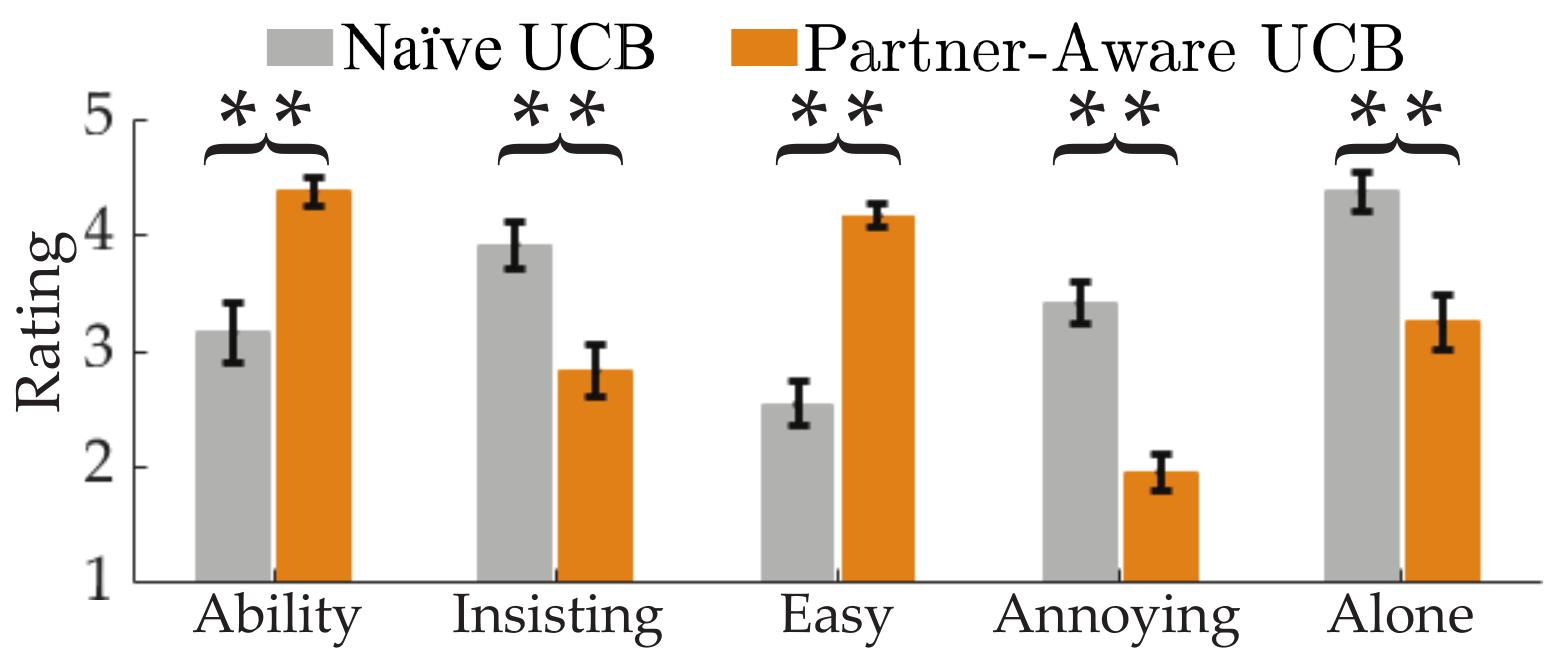}
    \caption{Survey results of the online casino study. Comparisons are significantly in favor of Partner-Aware UCB.}
    \label{fig:survey}
\end{figure}

\noindent\textbf{Results-Subjective.} Our survey results indicate users significantly prefer Partner-Aware UCB: we plot the users' responses to our $5$-point rating scale survey in Fig.~\ref{fig:survey}. We first confirmed the responses were reliable with Cronbach's alpha $ >\!0.85$. The users indicated they were able to select their desired machine (\emph{Ability}) more frequently with the Partner-Aware UCB, and thought the Na\"ive UCB robot was more frequently insisting on suboptimal machines (\emph{Insisting}). Moreover, Partner-Aware UCB was easier to collaborate with (\emph{Easy}), and significantly less annoying (\emph{Annoying}). While the participants, on average, indicated they could earn more coins if they were playing alone\footnote{This is reasonable given that single-agent MAB is easier as it does not require decentralized coordination.}, they were significantly more confident in this after partnered with Na\"ive UCB. All of these subjective results are statistically significant with $p<.005$ and strongly support \textbf{H4}.

\section{Computation Infrastructure}
\label{appendix:infrastructure}
The bandit algorithms in all simulations and the in-lab burger stacking robot experiments have been run on a Lenovo ThinkPad P1 Gen 2 computer with 16 GB RAM and an 8$^\textrm{th}$ Generation Intel$^\text{\textregistered}$ Core$\text{\texttrademark}$ i7-8850H processor (2.60 GHz, up to 4.30 GHz with Turbo Boost, 6 Cores, 12 Threads, 9 MB Cache). Online portion of the burger stacking robot experiments and all of online casino experiments are conducted using Amazon Web Services (AWS) on an Elastic Compute Cloud (EC2) instance with 4 vCPUs and 16 GB RAM.

\end{document}